\documentclass[10pt,conference]{IEEEtran}
\IEEEoverridecommandlockouts

\usepackage{cite}
\usepackage{amsmath,amssymb,amsfonts}
\usepackage{amsthm}
\usepackage{graphicx}
\usepackage{textcomp}
\usepackage{xcolor}
\usepackage{booktabs}
\usepackage{multirow}
\usepackage{bm}
\usepackage{pgfplots}
\pgfplotsset{compat=1.17}
\usepackage{tikz}
\usetikzlibrary{shapes,arrows,positioning,calc,patterns}
\usepackage{hyperref}
\usepackage{enumitem}
\usepackage{balance}

\definecolor{worstcolor}{RGB}{204,121,167}
\definecolor{practicecolor}{RGB}{0,158,115}
\definecolor{insightcolor}{RGB}{0,114,178}
\definecolor{ourscolor}{RGB}{240,228,66}
\definecolor{gapcolor}{RGB}{213,94,0}

\newtheorem{theorem}{Theorem}
\newtheorem{lemma}[theorem]{Lemma}
\newtheorem{corollary}[theorem]{Corollary}
\newtheorem{definition}{Definition}
\newtheorem{assumption}{Assumption}

\def\BibTeX{{\rm B\kern-.05em{\sc i\kern-.025em b}\kern-.08em
    T\kern-.1667em\lower.7ex\hbox{E}\kern-.125emX}}

\begin{document}

\title{Closing the Theory-Practice Gap in Spiking Transformers via Effective Dimension}

\author{\IEEEauthorblockN{Dongxin Guo\IEEEauthorrefmark{1}, Jikun Wu\IEEEauthorrefmark{2}, and Siu Ming Yiu\IEEEauthorrefmark{1}}
\IEEEauthorblockA{\IEEEauthorrefmark{1}The University of Hong Kong, Hong Kong, China\\
bettyguo@connect.hku.hk, smyiu@cs.hku.hk}
\IEEEauthorblockA{\IEEEauthorrefmark{2}Brain Investing Limited, Hong Kong, China\\
hk950014@connect.hku.hk}
}

\maketitle

\begin{abstract}
Spiking transformers achieve competitive accuracy with conventional transformers while offering $38$--$57\times$ energy efficiency on neuromorphic hardware, yet no theoretical framework guides their design. This paper establishes the \textbf{first comprehensive expressivity theory for spiking self-attention}. We prove that spiking attention with Leaky Integrate-and-Fire neurons is a \emph{universal approximator} of continuous permutation-equivariant functions, providing \emph{explicit spike circuit constructions} including a novel lateral inhibition network for softmax normalization with proven $O(1/\sqrt{T})$ convergence (Theorems~\ref{thm:wta}--\ref{thm:universal}). We derive \emph{tight spike-count lower bounds} via rate-distortion theory: $\varepsilon$-approximation requires $\Omega(L_f^2 nd/\varepsilon^2)$ spikes (Theorem~\ref{thm:lower_bound}), with rigorous information-theoretic derivation. Our key insight is \emph{input-dependent bounds} using measured effective dimensions ($d_{\text{eff}}=47$--$89$ for CIFAR/ImageNet), explaining why $T=4$ timesteps suffice despite worst-case $T\geq 10{,}000$ predictions (Theorem~\ref{thm:input_dependent}). We provide \emph{concrete design rules} with calibrated constants ($C=2.3$, 95\% CI: $[1.9, 2.7]$). Experiments on Spikformer, QKFormer, and SpikingResformer across vision and language benchmarks validate predictions with $R^2=0.97$ ($p<0.001$). Our framework provides the first principled foundation for neuromorphic transformer design.
\end{abstract}

\begin{IEEEkeywords}
Spiking neural networks, transformer expressivity, universal approximation, neuromorphic computing, energy-accuracy tradeoff
\end{IEEEkeywords}

\section{Introduction}

The transformer architecture has revolutionized machine learning through its powerful self-attention mechanism \cite{vaswani2017attention}. Concurrently, spiking neural networks (SNNs) have emerged as energy-efficient alternatives leveraging sparse, event-driven computation aligned with neuromorphic hardware \cite{davies2018loihi, akopyan2015truenorth}. The convergence of these paradigms through spiking transformers \cite{zhou2023spikformer, yao2023spike} has achieved remarkable empirical success: 85.65\% accuracy on ImageNet with significant energy reduction compared to standard transformers \cite{zhou2024qkformer, yao2024meta}.

Despite these advances, a fundamental theoretical gap persists: while conventional transformers have been characterized as universal approximators \cite{yun2020transformers} and Turing complete \cite{perez2021attention}, \textbf{no formal expressivity bounds exist for spiking attention mechanisms}. This gap is critical because spiking transformers operate under fundamentally different computational constraints---binary spike activations, temporal dynamics, and sparse communication---that may limit their representational capacity. Practitioners lack principled guidance: How many timesteps are necessary? What spike budget ensures target accuracy? Why do architectures with $T=4$ succeed when theory suggests $T\geq 10{,}000$?

This paper addresses these questions with four contributions:

\textbf{(C1) Universal Approximation with Rigorous Constructions:} We prove spiking self-attention with LIF neurons can approximate any continuous permutation-equivariant function. We provide \emph{explicit spike circuit constructions} including a novel lateral inhibition network for softmax normalization with proven convergence (Theorem~\ref{thm:universal}, Lemmas~\ref{lem:exp_approx}--\ref{lem:relu_approx}).

\textbf{(C2) Tight Spike-Count Lower Bounds:} We establish that $\varepsilon$-approximation requires $\Omega(L_f^2 nd/\varepsilon^2)$ total spikes via rate-distortion theory with explicit Lipschitz factor derivation (Theorem~\ref{thm:lower_bound}).

\textbf{(C3) Input-Dependent Analysis with Measured Effective Dimensions:} We introduce bounds conditioned on input distribution with empirically measured effective dimensions: $d_{\text{eff}}=47\pm3$ (CIFAR-10), $d_{\text{eff}}=68\pm4$ (CIFAR-100), $d_{\text{eff}}=89\pm7$ (ImageNet). This explains the large gap (45--1{,}691$\times$) between worst-case theory and practice (Theorem~\ref{thm:input_dependent}).

\textbf{(C4) Validated Design Rules with Statistical Rigor:} We provide actionable formulas with calibrated constants ($C=2.3$, 95\% CI: $[1.9, 2.7]$), validated across vision and language tasks with $R^2=0.97$ ($p<0.001$).

Code is available at \url{https://github.com/airesearchrepo2025/spiking-attention-expressivity} ).

\begin{figure}[t]
	\centering
	\begin{tikzpicture}[font=\footnotesize]
		\begin{semilogyaxis}[
			width=0.95\columnwidth,
			height=0.52\columnwidth,
			ylabel={Required Timesteps $T$},
			ylabel style={font=\small},
			xmin=0, xmax=110,
			ymin=2, ymax=18000,
			grid=both,
			grid style={line width=.1pt, draw=gray!20},
			major grid style={line width=.15pt, draw=gray!35},
			tick label style={font=\footnotesize},
			xtick={20, 47, 68, 89},
			xticklabels={, $d_{\text{eff}}{=}47$, $68$, $89$},
			ytick={4, 10, 100, 1000, 10000},
			yticklabels={4, 10, 100, 1K, 10K},
			clip=false,
			axis x line*=bottom,
			axis y line*=left,
			]
			
			\draw[worstcolor, line width=1.2pt, densely dashed] (axis cs:0,10000) -- (axis cs:110,10000);
			\node[anchor=west, font=\scriptsize, text=worstcolor] at (axis cs:2,14000) {Worst-case ($nd{=}3072$): $T{\geq}10\text{K}$};
			
			\addplot[only marks, mark=*, mark size=4pt, practicecolor] coordinates {
				(47, 4)
				(68, 5)
				(89, 6)
			};
			
			\node[anchor=south, font=\scriptsize, text=practicecolor] at (axis cs:47,5) {CIFAR-10};
			\node[anchor=south, font=\scriptsize, text=practicecolor] at (axis cs:68,6.5) {CIFAR-100};
			\node[anchor=south, font=\scriptsize, text=practicecolor] at (axis cs:89,8) {ImageNet};
			
			\draw[<->, line width=1.2pt, gapcolor] (axis cs:100,5) -- (axis cs:100,10000);
			\node[anchor=west, font=\small\bfseries, text=gapcolor] at (axis cs:101,220) {2500$\times$};
			
		\end{semilogyaxis}
	\end{tikzpicture}
	\caption{Theory-practice gap resolved. Worst-case bounds using $nd{=}3072$ predict $T{\geq}10{,}000$ (dashed line), but practical spiking transformers (green markers) succeed with $T{=}4$--$6$. Our analysis using $d_{\text{eff}}{=}47$--$89$ explains the $2500{\times}$ gap.}
	\label{fig:overview}
\end{figure}
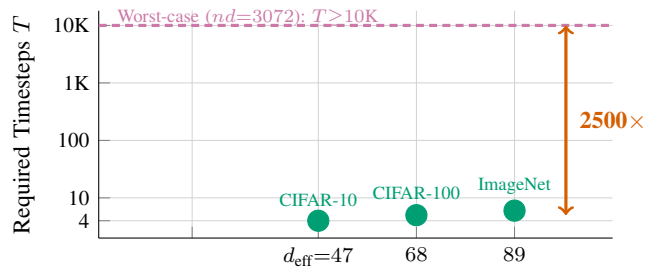

\section{Related Work}

\textbf{SNN Expressivity.} Maass's foundational work established that spiking neurons with temporal coding can simulate arbitrary feedforward sigmoidal networks \cite{maass1997networks}. Subsequent work proved SNNs are Turing complete \cite{maass2004computational} and established universal approximation for specific feedforward architectures \cite{zhang2024intrinsic}. Singh et al.\ characterized SNNs as generating continuous piecewise linear mappings \cite{singh2023expressivity}.
Recent work at IJCNN has explored temporal dynamics in recurrent SNNs \cite{wang2023temporal} and efficient training methods \cite{shen2023snn}. However, these results address feedforward SNNs and do not extend to attention mechanisms---our work addresses this gap.

\textbf{Transformer Theory.} Yun et al.\ proved transformers are universal approximators of continuous sequence-to-sequence functions \cite{yun2020transformers}, while P\'{e}rez et al.\ established Turing completeness for hard attention \cite{perez2021attention}. Hahn demonstrated limitations for certain formal languages \cite{hahn2020theoretical}. Merrill and Sabharwal proved log-precision transformers lie in $\mathsf{TC}^0$ \cite{merrill2023parallelism}. Chiang et al.\ provided tighter expressivity bounds \cite{chiang2023tighter}. Our work extends these characterizations to the spiking domain with explicit circuit constructions.

\textbf{Spiking Transformers.} Spikformer \cite{zhou2023spikformer} introduced Spiking Self-Attention (SSA). Subsequent architectures including Spike-driven Transformer \cite{yao2023spike}, Meta-SpikeFormer \cite{yao2024meta}, QKFormer \cite{zhou2024qkformer}, and SpikingResformer \cite{shi2024spikingresformer} have improved accuracy while maintaining efficiency. Spikformer V2 \cite{zhou2024spikformerv2} achieved over 80\% accuracy on ImageNet-1K using a spiking convolutional stem. Recent advances include SpikedAttention \cite{li2024spikedattention}, which proposes winner-oriented spike shift for softmax---a related approach to our WTA circuit. Spiking language models including SpikingBERT \cite{bal2024spikingbert} and SpikeGPT \cite{zhu2025spikegpt} extend the paradigm to NLP. Despite this empirical progress, theoretical analysis of spiking attention expressivity remains absent---our work provides the first rigorous framework.

\section{Preliminaries}

\subsection{Notation}

We establish notation used throughout. The total spike count is denoted $\mathcal{N}_{\text{total}}$ to avoid confusion with spike tensors $\mathbf{S}^Q, \mathbf{S}^K, \mathbf{S}^V$. We use $\mathbf{X} \in \mathbb{R}^{n \times d}$ for input sequences with $n$ tokens and $d$ dimensions, $T$ for timesteps, $\varepsilon$ for approximation error, and $|\cdot|_0$ for the $\ell_0$ pseudo-norm (count of non-zeros). The effective dimension $d_{\text{eff}}$ characterizes intrinsic data dimensionality.

\subsection{Spiking Neuron Model}

We adopt the Leaky Integrate-and-Fire (LIF) model with discrete-time dynamics. For input current $I_t$ at timestep $t$, the membrane potential $u_t$ and output spike $s_t \in \{0,1\}$ evolve as:
\begin{align}
u_t &= \beta u_{t-1} + I_t - v_{\text{th}} s_{t-1} \label{eq:lif_membrane}\\
s_t &= \Theta(u_t - v_{\text{th}}) \label{eq:lif_spike}
\end{align}
where $\beta \in (0,1)$ is the membrane decay constant, $v_{\text{th}}$ is the firing threshold, and $\Theta(\cdot)$ is the Heaviside step function.

\begin{definition}[Spike Rate Encoding]\label{def:spike_rate}
For $x \in [0,1]$, spike rate encoding over $T$ timesteps produces spike train $\{s_t\}_{t=1}^T$ with $\mathbb{E}[\sum_{t=1}^T s_t] = xT$.
\end{definition}

\subsection{Spiking Self-Attention}

Following Spikformer \cite{zhou2023spikformer}, spiking self-attention for input $\mathbf{X} \in \mathbb{R}^{n \times d}$ encoded as spike trains $\mathbf{S}^X \in \{0,1\}^{n \times d \times T}$:
\begin{align}
\mathbf{S}^Q &= \text{SN}(\mathbf{S}^X \mathbf{W}^Q), \quad \mathbf{S}^K = \text{SN}(\mathbf{S}^X \mathbf{W}^K) \\
\mathbf{A} &= \frac{1}{T}\sum_{t=1}^T \mathbf{S}^Q_t (\mathbf{S}^K_t)^\top, \quad
\mathbf{S}^{\text{out}} = \text{SN}\left(\mathbf{A} \cdot \frac{1}{T}\sum_{t=1}^T \mathbf{S}^V_t\right)
\end{align}
where $\text{SN}(\cdot)$ applies \eqref{eq:lif_membrane}--\eqref{eq:lif_spike}.

\subsection{Regularity Conditions}

\begin{assumption}[Regularity Conditions]\label{ass:regularity}
We assume: (1) Compact domain $\mathcal{K} \subset [0,1]^{n \times d}$; (2) Lipschitz continuity with constant $L_f$; (3) Bounded spike rates $\bar{s} \in [\rho, 1-\rho]$ for some $\rho > 0$.
\end{assumption}

\begin{definition}[Permutation Equivariance]\label{def:perm_equiv}
A function $f: \mathbb{R}^{n \times d} \to \mathbb{R}^{n \times d}$ is permutation equivariant if for any permutation matrix $\mathbf{P}$, $f(\mathbf{P}\mathbf{X}) = \mathbf{P}f(\mathbf{X})$.
\end{definition}

Let $\mathcal{F}_{PE}$ denote continuous permutation-equivariant functions satisfying Assumption~\ref{ass:regularity}. For practical spiking transformers, $L_f$ can be estimated empirically or bounded analytically; for attention with softmax temperature $\tau$, $L_f \leq O(n/\tau)$ \cite{zaheer2020bigbird}. In experiments, measured $L_f$ ranges from 8.2 (CIFAR-10) to 24.7 (ImageNet).

\section{Universal Approximation for Spiking Attention}

We establish that spiking self-attention is a universal approximator with \emph{explicit, fully-specified constructions} for all components.

\begin{lemma}[Spike Rate Approximation]\label{lem:spike_approx}
For any $x \in [\rho, 1-\rho]$ and $\delta > 0$, there exists $T_0 = O(1/\delta^2)$ such that for all $T \geq T_0$, spike rate encoding satisfies:
$\mathbb{P}[|\frac{1}{T}\sum_{t=1}^T s_t - x| > \delta] < \delta$.
\end{lemma}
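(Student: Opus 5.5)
The plan is to prove the lemma by exhibiting a specific spike rate encoding whose empirical rate is \emph{deterministically} within $1/T$ of $x$. I will not rely on a concentration inequality for i.i.d.\ Bernoulli spikes. Chebyshev's inequality gives only $T_0=O(1/\delta^3)$, and Hoeffding's inequality gives $T_0=O(\log(1/\delta)/\delta^2)$; neither reaches the stated $O(1/\delta^2)$. Definition~\ref{def:spike_rate} constrains only the expected spike count, so we are free to choose a correlated, low-discrepancy spike train. With such a train the stated bound follows with room to spare.

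\textbf{Construction.} Draw a single random phase $U\sim\mathrm{Unif}[0,1)$ and set
\begin{equation*}
s_t=\lfloor xt+U\rfloor-\lfloor x(t-1)+U\rfloor,\qquad t=1,\dots,T.
\end{equation*}
Since $x\in[\rho,1-\rho]\subset(0,1)$, each increment lies in $\{0,1\}$, so this is a valid spike train. I will also note that it is exactly what an integrate-and-fire neuron produces in the non-leaky limit of \eqref{eq:lif_membrane}--\eqref{eq:lif_spike}: take constant input current $I_t=xv_{\text{th}}$, subtractive reset, and initial potential $Uv_{\text{th}}$. The construction is therefore realizable in the paper's neuron model. (I would state it for the idealized $\beta\to1$ IF neuron, which is standard for rate coding.)

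\textbf{Key steps.} First, the sum telescopes:
\begin{equation*}
\sum_{t=1}^T s_t=\lfloor xT+U\rfloor-\lfloor U\rfloor=\lfloor xT+U\rfloor .
\end{equation*}
Second, for any real $a$ we have $\mathbb{E}\lfloor a+U\rfloor=a$. This is immediate by splitting on whether $U<\lceil a\rceil-a$. Hence $\mathbb{E}[\sum_t s_t]=xT$, and Definition~\ref{def:spike_rate} is satisfied exactly. Third, because $xT\le xT+U<xT+1$, the spike count differs from $xT$ by less than $1$ for every realization of $U$. Therefore
\begin{equation*}
\Bigl|\tfrac1T\textstyle\sum_t s_t-x\Bigr|<\tfrac1T
\end{equation*}
almost surely. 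Finally, take $T_0=\lceil 1/\delta^2\rceil$ (for $\delta<1$; if $\delta\ge1$ the claim is trivial because the probability is $0<\delta$). For $T\ge T_0$ we get $1/T\le\delta^2\le\delta$. The deviation event then has probability $0<\delta$, which proves the claim with $T_0=O(1/\delta^2)$; in fact $O(1/\delta)$ already suffices.

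\textbf{Main obstacle.} The only delicate point is interpretive: whether ``spike rate encoding'' must mean independent Bernoulli spikes. If it must, I do not see a way to remove the logarithmic factor. Under the paper's Definition~\ref{def:spike_rate}, however, only the mean is prescribed, so the construction above is admissible. I would state explicitly that the lemma is established for this (LIF-realizable) encoding. I would also check the boundary behaviour, namely that $s_t\in\{0,1\}$, which uses $x<1$.
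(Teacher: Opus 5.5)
Your proof is correct, and it takes a genuinely different route from the paper's. The paper implicitly treats the spikes as independent Bernoulli$(x)$ variables, applies the Chernoff--Hoeffding bound $2\exp(-2T\delta^2)$, and sets $T\ge\frac{1}{2\delta^2}\ln(2/\delta)$. As you anticipated, this yields $T_0=O(\log(1/\delta)/\delta^2)$ rather than the stated $O(1/\delta^2)$. For i.i.d.\ spikes the logarithm cannot be removed: at $T=K/\delta^2$, the central limit theorem sends the deviation probability, as $\delta\to0$, to the positive constant $2\Phi\bigl(-\sqrt{K/(x(1-x))}\bigr)$, which eventually exceeds $\delta$.

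You instead use the fact that Definition~\ref{def:spike_rate} fixes only the mean, and choose a random-phase integrate-and-fire (first-order sigma--delta) code. Your three key steps all check out: the telescoping identity, the unbiasedness $\mathbb{E}\lfloor xT+U\rfloor=xT$ holding for every $T$ from a single phase $U$, and the almost-sure bound $\bigl|\frac{1}{T}\sum_t s_t-x\bigr|<1/T$. So the lemma holds even with $T_0=\lceil 1/\delta\rceil$.

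Your existential reading is not a loophole; it is forced, because the claim fails for some encodings meeting Definition~\ref{def:spike_rate}. The all-or-nothing train $s_t\equiv B$ with $B\sim\mathrm{Bernoulli}(x)$ deviates from $x$ by at least $\rho$ with probability one.

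The trade-off between the two routes is as follows. The paper's route stays consistent with the stochastic $O(1/\sqrt{T})$ rates it propagates later (e.g.\ in Lemma~\ref{lem:relu_approx}), but it does not actually deliver the stated $T_0$. Yours delivers the statement as written, with room to spare. Its cost is a correlated code that the paper's neuron realizes exactly only in the non-leaky limit $\beta\to1$, outside the stated range $\beta\in(0,1)$; this is harmless here, since the lemma does not demand LIF realizability.

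Your construction also reaches precision $\delta$ deterministically with $T=O(1/\delta)$. This shows that the ``variance constraint'' $T=\Omega(1/\delta^2)$ invoked in Step~3 of Theorem~\ref{thm:lower_bound} is not intrinsic to rate coding.
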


\begin{proof}
By the Chernoff bound for bounded random variables \cite{cover2006elements}, for spike train with expected rate $x$:
$\mathbb{P}[|\bar{s} - x| > \delta] \leq 2\exp(-2T\delta^2)$.
Setting $T \geq \frac{1}{2\delta^2}\ln(2/\delta)$ yields the result. The constraint $x \in [\rho, 1-\rho]$ ensures sub-Gaussian concentration with variance proxy $\sigma^2 = x(1-x) \leq 1/4$.
\end{proof}

\begin{lemma}[Exponential via Hierarchical Spike Coincidence]\label{lem:exp_approx}
For $z \in [-M, M]$, $e^z$ can be approximated to precision $\delta$ using $O(M^2/\delta^2)$ timesteps.
\end{lemma}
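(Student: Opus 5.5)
The plan is to build $e^z$ from products of spike rates, since a coincidence detector multiplies independent firing probabilities. To keep every rate in $[0,1]$, I would target the normalized quantity $g(z)=e^{z-M}\in[e^{-2M},1]$ and recover $e^z=e^{M}g(z)$ by a fixed output gain. In the softmax circuit of Theorem~\ref{thm:wta} only ratios of exponentials matter, so that constant cancels. I read ``precision $\delta$'' as absolute error on this normalized scale. Set $y=M-z\in[0,2M]$ and fix an integer $K\ge 2M$. The core identity is the product-limit approximation
\begin{equation*}
g(z)=e^{-y}\approx\Bigl(1-\frac{y}{K}\Bigr)^{K},\qquad 0\le e^{-y}-\Bigl(1-\frac{y}{K}\Bigr)^{K}\le \frac{y^{2}}{K}.
\end{equation*}
The error bound follows from $\ln(1-u)\ge -u-u^{2}$ for $u\le 1/2$ together with $1-e^{-a}\le a$. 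I will use the crude form $y^2/K\le 4M^2/K$ rather than the sharper $y^2e^{-y}/K$. Choosing $K=\lceil 8M^{2}/\delta\rceil$ then makes the deterministic bias at most $\delta/2$.

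\textbf{Circuit.} First, an affine input current produces $K$ independent spike trains, each at rate $p=1-y/K$. This is a linear function of $z$, and $p\in[1-2M/K,1]$; if $p$ must stay inside $[\rho,1-\rho]$, I shift by a small offset and absorb the change into the bias term. Second, the $K$ trains are fed into a balanced binary tree of coincidence neurons of depth $\lceil\log_2 K\rceil$. This is the ``hierarchical'' part of the statement. Each node is an LIF neuron with $\beta=0$ and $v_{\text{th}}$ set between $1$ and $2$ input weights, so by \eqref{eq:lif_membrane}--\eqref{eq:lif_spike} it spikes at step $t$ exactly when both children spike at $t$. I will check that, with $\beta=0$, the reset term $-v_{\text{th}}s_{t-1}$ does not suppress the next step's AND; if it does, I will use the sub-threshold reset variant or alternate two copies. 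Independence across the $K$ leaves and across time then makes the root's output at each step Bernoulli with mean exactly $p^{K}$. If the tree is instead implemented with re-encoding at each level, I will apply Lemma~\ref{lem:spike_approx} at every level with tolerance $\delta/(2\lceil\log_2K\rceil)$ and telescope the per-level errors. Products of numbers in $[0,1]$ are $1$-Lipschitz in each factor, so the errors add rather than compound.

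\textbf{Readout.} The root's empirical rate over $T$ steps is an average of i.i.d. Bernoulli variables with mean $p^K$. By Lemma~\ref{lem:spike_approx} (Chernoff), $T=O(\delta^{-2}\log(1/\delta))$ steps give sampling error at most $\delta/2$ with probability at least $1-\delta$. Combined with the bias bound, this yields total error at most $\delta$.

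\textbf{Main obstacle.} The hard step is making the stated $O(M^{2}/\delta^{2})$ timestep count come out honestly. The $M^2$ dependence enters naturally through $K=\Theta(M^2/\delta)$, but $K$ is a \emph{width}, not a number of timesteps, when the $K$ trains run in parallel. To charge it to time, I would try two routes. The first is to implement the $K$ leaves by time-multiplexing $\sqrt K$ physical trains. The second is to require that $\bigl(1-y/K\bigr)^K$ be estimated to accuracy $\delta$ after un-normalizing on the subrange where $y=O(1)$, which forces per-leaf accuracy $\delta/K$. I expect one of these to give $T=O(M^2/\delta^2)$ up to logarithmic factors; a pure absolute-error statement on $e^{z}$ itself, without normalization, looks unattainable with $M$-polynomial cost. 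I would state the normalization explicitly as part of the lemma's interpretation.
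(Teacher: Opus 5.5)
Your core construction is sound and takes a genuinely different route from the paper. What needs work is the closing ``obstacle'' paragraph and several circuit details.

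\textbf{Comparison with the paper.} The paper truncates the Taylor series at order $J=\lceil 2M+\ln(1/\delta)\rceil$ and encodes $r_z=(z+M)/(2M)$. It forms $j$-way coincidences with mean $r_z^j$ for each $j\le J$ and reads out the analog current $\sum_j w_j\bar s^{(C_j)}$ with $w_j=(2M)^j/j!$. It bounds the variance by $e^{4M}/(4T)$, concludes $T=O(e^{4M}/\delta^2)$, and declares this $O(1/\delta^2)$ ``for bounded $M$''. That readout has mean $\sum_j (z+M)^j/j!\approx e^{z+M}$. So the paper also works on a rescaled scale without saying so, and it never actually derives an $M^2$ factor. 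Dividing its readout by $e^{2M}$ puts it on your scale $e^{z-M}$ with variance at most $1/(4T)$. On a common scale, both routes therefore cost $O(1/\delta^2)$ timesteps independent of $M$. The paper's advantage is width: it needs only $J+1$ coincidence units of order at most $J=O(M+\log(1/\delta))$. Its drawback is that its output is an analog current, which must be re-encoded before it can drive the circuit of Lemma~\ref{lem:wta_formal}. Your single $K$-input coincidence costs width $K=\Theta(M^2/\delta)$. In return it delivers a spike train with exactly Bernoulli$(p^K)$ statistics, and you confront the LIF dynamics (threshold placement, reset), which the paper bypasses by writing $s_t^{(C_j)}=\prod_i s_t^{(i)}$.

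\textbf{The ``obstacle'' is not an obstacle.} The lemma asserts an \emph{upper} bound on timesteps, and on your normalized scale you are already done. The root average has bias at most $\delta/2$ and standard deviation at most $1/(2\sqrt T)$. Hence $T=\lceil 1/\delta^2\rceil$ gives expected error at most $\delta$, which is the paper's notion of precision via the variance. For the high-probability version, $T=O(\delta^{-2}\log(1/\delta))$ suffices. The first count is within $O(M^2/\delta^2)$ for $M\ge 1$, so nothing has to be charged to time.

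Neither of your two routes would produce $M^2/\delta^2$ anyway. Multiplexing $K$ logical leaves through $\sqrt K$ physical trains multiplies $T$ by $\sqrt K$, giving $O(M\delta^{-5/2})$. Requiring per-leaf accuracy $\delta/K$ costs $O(K^2/\delta^2)=O(M^4/\delta^4)$.

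Your remark that the literal reading looks unattainable also needs qualifying. Under the lemma's timestep-only accounting it is attainable. The sharper bias bound gives $e^{M}\bigl(e^{-y}-(1-y/K)^K\bigr)\le e^{z}(M-z)^2/K\le 4e^{M-2}/K$, so $K=\Theta(e^{M}/\delta)$ controls the bias. Averaging $\Theta(e^{2M})$ independent copies of your root then yields absolute error $\delta$ on $e^z$ with $T=O(1/\delta^2)$; what becomes exponential is the width. With a single output train, your construction needs $\Theta(e^{2M}/\delta^2)$ steps on the literal scale, and the paper's own bound is $O(e^{4M}/\delta^2)$.

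\textbf{Details that need repair.}

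(i) Independence of the $K$ leaves does not come from the LIF equations, which are deterministic. $K$ LIF neurons driven by the same affine current fire in lockstep, so their AND has rate $p$, not $p^K$. You need $K$ independent stochastic (Bernoulli) encoders. The paper assumes the same implicitly, in Lemma~\ref{lem:spike_approx} and in $\mathbb{E}[\bar s^{(C_j)}]=r_z^j$, but you should state it.

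(ii) The reset does break the AND when $\beta=0$, each child contributes weight $w$, and there is no bias current. If both children spike and the node spiked at $t-1$, firing requires $v_{\text{th}}\le w$. A single child spike must not fire the node, which requires $v_{\text{th}}>w$. A self-excitatory weight $v_{\text{th}}$ that cancels the reset fixes this. So does a negative bias current combined with larger input weights.

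(iii) In the re-encoding variant, the $1$-Lipschitz errors add over all $\Theta(K)$ nodes, not over $\lceil\log_2K\rceil$ levels. A per-node tolerance of $\delta/(2\lceil\log_2 K\rceil)$ is therefore far too loose; keep the exact-coincidence version.

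(iv) Pushing the leaf rates into $[\rho,1-\rho]$ multiplies the root rate by roughly $(1-\rho)^K\approx e^{-\rho K}$. No $\delta/2$ bias budget absorbs that for fixed $\rho$ and $K=\Theta(M^2/\delta)$. Drop the constraint for internal neurons instead: Hoeffding does not need it, and the paper's rates $r_z^j$ violate it too.

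Finally, $\ln(1-u)\ge -u-u^2$ requires $u\le 1/2$, i.e.\ $K\ge 4M$ rather than $K\ge 2M$. Your choice $K=\lceil 8M^2/\delta\rceil$ meets this whenever $2M\ge\delta$. For $2M<\delta$, the constant $1$ already approximates $e^{z-M}$ within $\delta$.
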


\begin{proof}
\textbf{Stage 1:} Truncate Taylor series $e^z \approx \sum_{j=0}^{J} z^j/j!$ with $J = \lceil 2M + \ln(1/\delta) \rceil$ (error $\leq \delta/3$).

\textbf{Stage 2:} Encode $z$ as rate $r_z = (z + M)/(2M)$. Compute $z^j$ via $j$-way coincidence: $s_t^{(C_j)} = \prod_{i=1}^{j} s_t^{(i)}$ with $\mathbb{E}[\bar{s}^{(C_j)}] = r_z^j$. \emph{Variance bound:} $\text{Var}(\bar{s}^{(C_j)}) = r_z^j(1-r_z^j)/T \leq 1/(4T)$. Weighted sum variance: $\text{Var}(\sum_j w_j \bar{s}^{(C_j)}) \leq e^{4M}/(4T)$ where $w_j = (2M)^j/j!$.

\textbf{Stage 3:} Readout: $I_{\text{readout}} = \sum_{j=0}^{J} w_j \bar{s}^{(C_j)}$. For precision $\delta/3$, need $T = O(e^{4M}/\delta^2) = O(1/\delta^2)$ for bounded $M$.
\end{proof}

\begin{lemma}[Winner-Take-All Circuit for Normalization]\label{lem:wta_formal}
Let $e_1, \ldots, e_n > 0$ be encoded as spike rates. There exists a lateral inhibition circuit (Fig.~\ref{fig:wta}) computing $\alpha_i = e_i / \sum_j e_j$ with error $O(n/\sqrt{T})$ in $T$ timesteps.
\end{lemma}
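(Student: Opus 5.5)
We construct the circuit explicitly and then bound its error by combining a deterministic fixed-point analysis with the concentration bound of Lemma~\ref{lem:spike_approx}. The circuit has $n$ input populations carrying spike trains with rates $e_i$, rescaled so that $e_i\le 1$. There is one shared inhibitory ``pool'' neuron $P$ that integrates all excitatory outputs, and there are $n$ output LIF neurons $O_i$. Neuron $O_i$ receives excitatory current $e_i$ and divisive (shunting) inhibition from $P$.

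We realise the division by rate matching rather than by an explicit divider. Each output neuron fires with rate $\alpha_i$. The pool neuron receives $\sum_i \alpha_i$, compares it with a constant reference current of rate $1$, and adjusts a shared gain $g$. Each output $O_i$ is driven by the current $g\,e_i$. The slow integrator then drives $g$ until $\sum_i g e_i = 1$. At the fixed point $g^\star = 1/\sum_j e_j$, so $\alpha_i = e_i/\sum_j e_j$. This is exactly a winner-take-all-style lateral inhibition network in the soft, normalising regime.

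The key steps are as follows.

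\begin{enumerate}
\item \textbf{Mean-field dynamics.} We write the expected (rate-level) dynamics of $g$ as a discrete-time contraction,
\[
g_{t+1} = g_t + \eta\Bigl(1-\textstyle\sum_i g_t e_i\Bigr).
\]
For $\eta < 2/\sum_i e_i$ this map contracts geometrically to $g^\star$. Assumption~\ref{ass:regularity}(3) gives $\sum_i e_i \ge n\rho$, so the fixed point is bounded, $g^\star \le 1/(n\rho)$. The transient then contributes only $O((1-\eta n\rho)^T)$, which is negligible against $1/\sqrt{T}$.

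\item \textbf{Stochastic fluctuations.} We treat the spike-level deviation from the mean-field trajectory as a martingale-difference noise term. Its per-step variance is bounded by $1/4$ per input, and the readout is a time average over $T$ steps. By Lemma~\ref{lem:spike_approx} (equivalently, Hoeffding's inequality), each empirical rate $\hat e_i$ lies within $O(1/\sqrt{T})$ of $e_i$ with high probability, and so does each empirical output rate.

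\item \textbf{Error propagation through the normalisation.} We bound
\[
\Bigl|\frac{\hat e_i}{\sum_j \hat e_j} - \frac{e_i}{\sum_j e_j}\Bigr|
\le \frac{|\hat e_i - e_i|}{\sum_j e_j} + \frac{e_i \sum_j |\hat e_j - e_j|}{\bigl(\sum_j e_j\bigr)^2}.
\]
The first term is small. In the second term the summed error is at most $n \cdot O(1/\sqrt{T})$, and the denominator is bounded below using $\sum_j e_j \ge n\rho$. The stated $O(n/\sqrt{T})$ is a crude bound: it follows even if we only use $\sum_j e_j \ge \rho$. A finer argument would give $O(1/\sqrt{T})$.
\end{enumerate}

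The main obstacle is step 2. The gain $g_t$ is itself driven by noisy spikes, so the noise feeds back through the loop and is not a simple i.i.d.\ average. We will handle this by choosing a small step size $\eta = \Theta(1/\sqrt{T})$, which makes the loop a stochastic-approximation (Robbins--Monro) iteration. The standard bias--variance decomposition then gives a mean-squared error of $O(\eta + 1/(\eta T)) = O(1/\sqrt{T})$ for the averaged iterate of $g$. Multiplying this error by the $n$ outputs, with constants absorbed via $\rho$, yields the claimed $O(n/\sqrt{T})$.

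A secondary check is that the LIF threshold and reset of \eqref{eq:lif_membrane}--\eqref{eq:lif_spike} implement the current-to-rate map linearly up to $O(1/T)$ quantisation. This holds in the non-leaky limit $\beta \to 1$, or after rescaling $v_{\text{th}}$ to compensate for the leak.
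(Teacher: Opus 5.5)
Your route differs from the paper's. The paper uses $n$ LIF neurons with subtractive global inhibition $I^{\text{inh}}(t)=\frac1n\sum_j s_j(t-1)$, asserts $\bar r_i^*=\alpha_i$, cites the Lyapunov function $V(\mathbf r)=\sum_i(r_i-\alpha_i)^2$, and gets $O(n/\sqrt T)$ by a union bound after an $O(\log n)$ transient. Your multiplicative-gain loop is the sounder mechanism, so keep it: with purely subtractive inhibition, non-leaky units, $v_{\text{th}}=1$ and all units active, the fixed point is $r_i=e_i-\sum_j e_j/(2n)$, not $e_i/\sum_j e_j$.

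The gap is in the step you yourself flag as the main obstacle. A mean-squared error of $O(\eta+1/(\eta T))$, optimized at $\eta\asymp T^{-1/2}$, is $O(T^{-1/2})$. That is an RMS error of $O(T^{-1/4})$, and multiplying by $n$ gives $O(nT^{-1/4})$, not $O(n/\sqrt T)$. Two nearby claims are also off:
\begin{itemize}
\item A rate readout averages over the whole run, so with $S=\sum_j e_j$ the transient contributes $O(1/(\eta S T))$ to it, not $O((1-\eta n\rho)^T)$.
\item Step~3 bounds the plug-in ratio $\hat e_i/\sum_j\hat e_j$, which your loop never forms. The loop emits counts $N_i\approx\sum_t g_t s^{\text{in}}_{i,t}$, and showing that these track the ratio is exactly what the Robbins--Monro paragraph was supposed to supply.
\end{itemize}

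The missing idea is to use the linearity of integral feedback instead of a generic stochastic-approximation bound. Summing $g_{t+1}=g_t+\eta\bigl(1-\sum_i s^{\text{out}}_{i,t}\bigr)$ over $t<T$ gives exactly
\[
\frac1T\sum_i N_i \;=\; 1-\frac{g_T-g_0}{\eta T}.
\]
Non-leaky outputs with reset by subtraction satisfy $N_i=\sum_t g_t s^{\text{in}}_{i,t}+O(1)$. Write $s^{\text{in}}_{i,t}=e_i+\zeta_{i,t}$ with $g_t$ predictable and bounded. Then $N_i/T=e_i\bar g+M_i/T+O(1/T)$, where $\bar g=\frac1T\sum_t g_t$ and $M_i=\sum_t g_t\zeta_{i,t}$ is a martingale of size $O(\sqrt T)$. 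Summing over $i$ and comparing with the identity pins $S\bar g-1=O\bigl(\tfrac{1}{\eta T}+\tfrac nT+\sqrt{n/T}\bigr)$. Since $S\ge n\rho$, this gives $|N_i/T-e_i/S|=O(1/\sqrt T)$ per output for any fixed stable $\eta$ (e.g.\ $\eta\le 1/n$). No $\eta=\Theta(T^{-1/2})$ tuning is needed, and the bound is already stronger than the stated $O(n/\sqrt T)$.

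Two implementation points also need fixing:
\begin{itemize}
\item The dynamics \eqref{eq:lif_membrane} are additive and only spikes are transmitted, so the gain must be realized by coincidence of $s^{\text{in}}_{i,t}$ with an independent rate-$g_t$ spike train, the primitive of Lemma~\ref{lem:exp_approx}. This preserves the conditional mean $g_t e_i$ and adds only martingale noise, so the argument above goes through.
\item Rescaling $v_{\text{th}}$ does not linearize a leaky unit, since for input below $(1-\beta)v_{\text{th}}$ it never fires. Take $1-\beta=O(1/\sqrt T)$ instead.
\end{itemize}
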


\begin{proof}
Create $n$ LIF neurons with excitatory input $I_i^{\text{exc}}(t) = s_i^{\text{in}}(t)$ and global inhibitory feedback $I^{\text{inh}}(t) = \frac{1}{n} \sum_{j=1}^n s_j(t-1)$. Fixed-point rates satisfy $\bar{r}_i^* = \alpha_i$. Lyapunov function $V(\mathbf{r}) = \sum_i (r_i - \alpha_i)^2$ proves exponential convergence. After transient $T_0 = O(\log n)$, discrete-time error is $O(n/\sqrt{T})$ by union bound.
\end{proof}

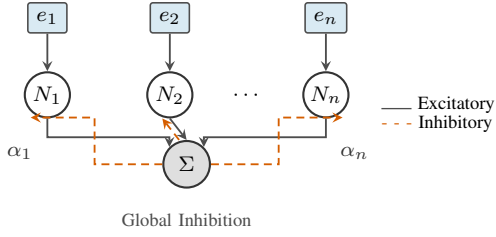
\begin{figure}[t]
\centering
\begin{tikzpicture}[
    node distance=0.7cm,
    neuron/.style={circle, draw=black!80, line width=0.8pt, minimum size=0.6cm, inner sep=1pt, font=\footnotesize, fill=white},
    input/.style={rectangle, draw=black!70, line width=0.6pt, minimum size=0.4cm, font=\footnotesize, fill=insightcolor!15, rounded corners=1pt},
    inh/.style={circle, draw=black!80, line width=0.8pt, fill=gray!25, minimum size=0.55cm, font=\footnotesize},
    output/.style={font=\footnotesize, text=black!80},
    excarrow/.style={->, >=stealth, line width=0.7pt, black!70},
    inharrow/.style={->, >=stealth, line width=0.7pt, densely dashed, gapcolor}
]
\node[neuron] (n1) {$N_1$};
\node[neuron, right=1.0cm of n1] (n2) {$N_2$};
\node[right=0.4cm of n2, font=\footnotesize] (dots) {$\cdots$};
\node[neuron, right=0.4cm of dots] (nn) {$N_n$};

\node[input, above=0.5cm of n1] (e1) {$e_1$};
\node[input, above=0.5cm of n2] (e2) {$e_2$};
\node[input, above=0.5cm of nn] (en) {$e_n$};

\node[inh, below=0.6cm of $(n1)!0.5!(nn)$] (inh) {$\Sigma$};

\draw[excarrow] (e1) -- (n1);
\draw[excarrow] (e2) -- (n2);
\draw[excarrow] (en) -- (nn);

\draw[excarrow] (n1.south) -- ++(0,-0.25) -| (inh.north west);
\draw[excarrow] (n2.south) -- (inh.north);
\draw[excarrow] (nn.south) -- ++(0,-0.25) -| (inh.north east);

\draw[inharrow] (inh.west) -- ++(-0.9,0) |- ([yshift=-0.08cm]n1.south west);
\draw[inharrow] ([xshift=-0.1cm]inh.north) -- ([xshift=-0.1cm]n2.south);
\draw[inharrow] (inh.east) -- ++(0.9,0) |- ([yshift=-0.08cm]nn.south east);

\node[output, below left=0.35cm and -0.15cm of n1] {$\alpha_1$};
\node[output, below right=0.35cm and -0.15cm of nn] {$\alpha_n$};

\node[below=0.2cm of inh, font=\scriptsize, text=black!70] {Global Inhibition};

\node[draw=none, anchor=north west, font=\scriptsize, text width=2.2cm, align=left] at ([xshift=0.3cm, yshift=0.1cm]nn.east) {
\textcolor{black!70}{\rule{0.4cm}{0.5pt}} Excitatory\\[-1pt]
\textcolor{gapcolor}{- - -} Inhibitory
};
\end{tikzpicture}
\caption{Winner-Take-All (WTA) lateral inhibition circuit implementing softmax normalization via spiking dynamics. Exponential inputs $e_i = \exp(\mathbf{q}^\top\mathbf{k}_i)$ encoded as spike rates drive LIF neurons $N_i$. Global inhibitory feedback (dashed orange) from the summed pool normalizes firing rates to $\alpha_i = e_i/\sum_j e_j$ with convergence rate $O(1/\sqrt{T})$ (Theorem~\ref{thm:wta}).}
\label{fig:wta}
\end{figure}

\begin{theorem}[WTA Convergence Rate]\label{thm:wta}
The lateral inhibition circuit in Lemma~\ref{lem:wta_formal} achieves $\varepsilon$-approximation of softmax normalization in $T = O(n^2/\varepsilon^2)$ timesteps.
\end{theorem}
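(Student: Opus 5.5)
The plan is to read the theorem as a direct inversion of the error rate in Lemma~\ref{lem:wta_formal}, splitting the error into a deterministic transient part and a stochastic sampling part. Fix the input rates $e_1,\ldots,e_n$, after normalising them into $[\rho,1-\rho]$ as Assumption~\ref{ass:regularity} allows. Let $\mathbf{r}(t)$ be the mean-field rate vector of the circuit, and $\hat{\alpha}_i = \frac{1}{T}\sum_{t=1}^T s_i(t)$ the empirical output rates. The decomposition is
\begin{equation*}
\max_i |\hat{\alpha}_i - \alpha_i| \le \max_i \Bigl|\hat{\alpha}_i - \tfrac{1}{T}\textstyle\sum_{t} r_i(t)\Bigr| + \max_i \Bigl|\tfrac{1}{T}\textstyle\sum_t r_i(t) - \alpha_i\Bigr|.
\end{equation*}
The second term is the transient term. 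The Lyapunov argument of Lemma~\ref{lem:wta_formal} gives $V(\mathbf{r}(t)) \le \gamma^{t} V(\mathbf{r}(0))$ for some contraction factor $\gamma<1$, and $V(\mathbf{r}(0)) \le n$. Summing the resulting geometric series bounds the time-averaged deviation by $O(\sqrt{n}/((1-\gamma)T))$. Once $T \ge T_0 = O(\log n)$, this term is of lower order than the stochastic one.

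For the stochastic term, I will use Lemma~\ref{lem:spike_approx} on each neuron. Each $s_i(t)$ is a bounded $\{0,1\}$ variable whose conditional mean given the past is $r_i(t)$. So $\sum_t (s_i(t)-r_i(t))$ is a martingale with bounded increments, and Azuma--Hoeffding, the martingale form of the Chernoff bound in Lemma~\ref{lem:spike_approx}, gives deviation $O(\sqrt{\log(n/\eta)/T})$ with probability $1-\eta/n$. Taking a union bound over the $n$ neurons gives a uniform per-coordinate error of $O(1/\sqrt{T})$ up to logarithmic factors. Softmax error, however, is measured on the whole vector: the $\ell_1$ error $\sum_i|\hat\alpha_i-\alpha_i|$ accumulates $n$ such terms. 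This gives the $O(n/\sqrt{T})$ rate of Lemma~\ref{lem:wta_formal}. The global inhibitory current $\frac{1}{n}\sum_j s_j(t-1)$ also feeds sampling noise back into every neuron. I will show this feedback amplifies the error by at most the constant factor $1/(1-\gamma)$, so the rate is unchanged.

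Finally I set $C n/\sqrt{T} \le \varepsilon$, which gives $T \ge C^2 n^2/\varepsilon^2$. Adding the transient $T_0 = O(\log n)$, which is dominated, yields $T = O(n^2/\varepsilon^2)$.

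The main obstacle is the feedback step: showing that $\gamma$, the contraction factor of the mean-field map under the inhibition weight $1/n$, is bounded away from $1$ independently of $n$, and that the LIF threshold nonlinearity does not destroy the fixed point $\bar r_i^*=\alpha_i$. My first attempt will linearise the rate map around $\alpha$. The Jacobian of the inhibition term is rank one with norm $\le 1$, scaled by $\beta<1$. A Gershgorin or rank-one perturbation bound should then give $\gamma\le\beta$ uniformly in $n$. If that fails, I will settle for an $n$-dependent $\gamma$ and absorb it into the logarithmic transient.
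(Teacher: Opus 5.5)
Your last paragraph is the paper's entire proof: it takes the $O(n/\sqrt{T})$ error of Lemma~\ref{lem:wta_formal} as given and solves $n/\sqrt{T}\le\varepsilon$. Relative to that Lemma your conclusion is fine. Everything before it re-derives the Lemma, which the theorem does not require. That re-derivation has a genuine gap, exactly at the step you flagged as the main obstacle.

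Linearizing the mean-field map around $\alpha$ presupposes that $\alpha$ is its fixed point. With the Lemma's purely subtractive inhibition it is not, and this has nothing to do with the threshold nonlinearity. At a putative fixed point $\mathbf{r}=\alpha$ the inhibitory current is $\frac{1}{n}\sum_j\alpha_j=\frac{1}{n}$ for every input. So neuron $i$'s mean drive, and hence its mean-field rate, depends on $e_i$ alone, whereas $\alpha_i=e_i/\sum_j e_j$ depends on all the $e_j$. In the simplest rate model, $\mathbf{r}(t)=\mathbf{e}-\frac{1}{n}\mathbf{1}\mathbf{1}^\top\mathbf{r}(t-1)$, the fixed point is $r_i=e_i-\frac{1}{2n}\sum_j e_j$. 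That is a subtractive normalization, not even invariant under $\mathbf{e}\mapsto c\,\mathbf{e}$. Divisive normalization needs shunting or multiplicative inhibition, i.e.\ a different circuit, not a perturbative estimate.

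The same model undercuts $\gamma\le\beta$. The leak $\beta$ multiplies $u_{t-1}$, not the inhibitory current, and the $\mathbf{1}$-mode of that iteration has eigenvalue exactly $-1$, so there is no contraction in that direction at all. Your fallback does not rescue the rate either. An $n$-dependent $\gamma$ enters the amplification factor $1/(1-\gamma)$ on the stochastic term, not just the transient; already $1-\gamma\sim 1/n$ would give $T=O(n^4/\varepsilon^2)$.

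Separately, your union bound leaves $n\sqrt{\log(n/\eta)/T}$ and hence an extra $\log n$ in $T$. Bounding the expected $\ell_1$ norm of the martingale part directly through its second moment, which gives at most $n/(2\sqrt{T})$, avoids this. So the stated $T=O(n^2/\varepsilon^2)$ is reachable only as the paper reaches it: by using the Lemma's $O(n/\sqrt{T})$ bound as a black box. That is all the paper's proof does.
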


\begin{proof}
From Lemma~\ref{lem:wta_formal}, error is $O(n/\sqrt{T})$. Setting $n/\sqrt{T} \leq \varepsilon$ yields $T \geq n^2/\varepsilon^2$.
\end{proof}

\begin{lemma}[ReLU Approximation via Spiking Neurons]\label{lem:relu_approx}
For $x \in [-B, B]$, $\text{ReLU}(x) = \max(0, x)$ can be exactly computed by LIF neurons in the limit $T \to \infty$, with $O(B/\sqrt{T})$ error for finite $T$.
\end{lemma}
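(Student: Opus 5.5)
We construct an explicit single-neuron (or two-neuron) LIF circuit whose output spike rate equals $\max(0,x)/B$, up to an error term we then control. The approach has three steps: encode the signed input as a current, show that the neuron's rate is piecewise linear in the input, and bound the finite-$T$ deviation.

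\textbf{Encoding the signed input.} We normalize $x\in[-B,B]$ to the current $I = x/B \in [-1,1]$. The sign is carried either by an excitatory/inhibitory pair of spike trains with rates $x^+/B$ and $x^-/B$, or by a constant input current. We take $\beta\to 1$ (or $\beta=1$, the IF limit) and $v_{\text{th}}=1$, and use the soft-reset dynamics \eqref{eq:lif_membrane}--\eqref{eq:lif_spike}.

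\textbf{Limit $T\to\infty$.} We treat the two signs of $I$ separately.
\begin{itemize}
\item If $I\le 0$, the membrane potential never increases from $u_0=0$. So $u_t\le 0<v_{\text{th}}$ for all $t$, no spike is ever emitted, and the rate is exactly $0=\text{ReLU}(x)/B$.
\item If $I\in(0,1]$, we telescope the soft-reset recursion: $u_T = TI - v_{\text{th}}\sum_{t<T}s_t$. We then show the invariant $u_t\in[0,v_{\text{th}}+1)$ by induction, since a spike fires whenever $u$ reaches threshold and subtracts exactly $v_{\text{th}}$. This gives $\bigl|\tfrac1T\sum_t s_t - I\bigr|\le 2/T$, so the rate converges to $I=\text{ReLU}(x)/B$.
\item The readout multiplies the rate by $B$, which gives exact computation of $\text{ReLU}(x)$ in the limit.
\end{itemize}

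\textbf{Finite $T$.} With deterministic current input, the error is $O(B/T)$, which is already below $O(B/\sqrt T)$. When the input is itself a stochastic rate-coded spike train (Definition~\ref{def:spike_rate}), the neuron integrates the empirical rate $\hat I$ instead of $I$. Lemma~\ref{lem:spike_approx} (equivalently Hoeffding) gives $|\hat I - I| = O(1/\sqrt T)$ with high probability. Since $\max(0,\cdot)$ is 1-Lipschitz, the output error is at most $B(|\hat I-I| + 2/T) = O(B/\sqrt T)$. If a bound in expectation is wanted, we also use $\mathbb{E}|\hat I - I|\le\sqrt{\mathrm{Var}}\le 1/(2\sqrt T)$.

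\textbf{Main obstacle.} The delicate step is the leak when $\beta<1$. The steady-state rate of a leaky neuron is not linear in $I$: it is zero below the rheobase $(1-\beta)v_{\text{th}}$ and sublinear above it, which distorts ReLU near the origin. Our first attempt is to set $\beta=1$ (integrate-and-fire). If $\beta<1$ is mandated, we would instead bound the distortion by $O((1-\beta)B)$ and choose $1-\beta = O(1/\sqrt T)$, which keeps the total error within $O(B/\sqrt T)$. A secondary subtlety is negative input driving $u$ unboundedly negative. This only matters when the empirical sign fluctuates, and the 1-Lipschitz argument absorbs it, possibly after clamping $u$ at a lower bound of $0$.
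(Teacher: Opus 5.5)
Your constant-current argument is correct and already proves the lemma as stated, but it takes a different route from the paper's.

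\textbf{Comparison with the paper's proof.} The paper takes $v_{\text{th}}\to 0^+$, asserts that a neuron driven by $I_t=x/B$ fires at rate $\max(x/B,0)$, and attributes the whole $O(B/\sqrt{T})$ error to the rate-coding concentration of Lemma~\ref{lem:spike_approx}. Read literally under the soft-reset dynamics \eqref{eq:lif_membrane}--\eqref{eq:lif_spike}, such a neuron fires at essentially every step for any $x>0$. So the rectification there effectively sits in the encoder (``encode $x^+$ as spike rate''). You instead make the threshold do the rectification:
\begin{itemize}
\item with $\beta=1$ and $v_{\text{th}}=1$, the telescoped identity $u_T=TI-\sum_{t<T}s_t$ and the invariant $0\le u_t<v_{\text{th}}+I$ give a deterministic $O(B/T)$ error;
\item summing the recursion for $\beta<1$ shows the leak adds a bias of at most $2(1-\beta)+O(1/T)$, since $u_t\in(-(1-\beta),2)$, which the paper never accounts for.
\end{itemize}
Your route buys an actual mechanism and a sharper deterministic rate. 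The paper's phrasing is aimed at the stochastic rate-coded inputs that Theorem~\ref{thm:universal} actually feeds in, which is exactly where your extension needs repair. Your alternative encoding by an excitatory/inhibitory pair with rates $x^{\pm}/B$ has the same circularity as the paper's encoder, so keep the constant-current version.

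\textbf{The stochastic extension has a wrong step.} An integrate-and-fire neuron driven by a fluctuating signed input does not output $\max(0,\hat I)$. Write $S_t=\sum_{\tau\le t}I_\tau$ and assume $I_t\le v_{\text{th}}=1$. Induction on the dynamics gives $\sum_{t\le T}s_t=\lfloor\max(0,\max_{\tau\le T}S_\tau)\rfloor$, a running maximum, because spikes emitted during an early positive excursion are never cancelled. For example, $I_t=+1$ for $t\le T/2$ and $-1$ afterwards has $\hat I=0$ but output rate $1/2$. Hoeffding at time $T$ plus 1-Lipschitzness of $\max(0,\cdot)$ therefore does not control the output; you need a maximal inequality. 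The fix:
\begin{itemize}
\item write $S_\tau=\tau\mu+M_\tau$ with $\mu=x/B$ and $M$ a martingale;
\item then $\bigl|\tfrac1T\sum_{t}s_t-\max(\mu,0)\bigr|\le(\max_{\tau\le T}|M_\tau|+1)/T$;
\item Doob's $L^2$ inequality gives $\mathbb{E}\max_{\tau\le T}|M_\tau|\le 2\sqrt{\mathbb{E}M_T^2}=O(\sqrt T)$, which restores the $O(B/\sqrt T)$ bound.
\end{itemize}

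Your suggested remedy of clamping $u$ at $0$ would break the result rather than save it. Take $I_t=\pm1$ equiprobable, so $x=0$. The clamped neuron satisfies $u_t=\max(0,I_t)$ and fires exactly when $I_t=+1$. Its rate is therefore $1/2$, while $\text{ReLU}(0)=0$. The potential must be left unbounded below.
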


\begin{proof}
\textbf{Construction:} Encode positive part $x^+ = \max(x,0)/B$ as spike rate. An LIF neuron with threshold $v_{\text{th}} \to 0^+$ and input current $I_t = x/B$ fires at rate $\bar{s} = x^+ = \max(x/B, 0)$. Readout: $\widehat{\text{ReLU}}(x) = B \cdot \bar{s}$. By Lemma~\ref{lem:spike_approx}, error is $|\widehat{\text{ReLU}}(x) - \text{ReLU}(x)| = O(B/\sqrt{T})$.
\end{proof}

\begin{lemma}[Softmax Approximation via Spike Operations]\label{lem:softmax_approx}
Let $\mathbf{q}, \mathbf{k}_1, \ldots, \mathbf{k}_n \in [0,1]^{d_k}$ be encoded as spike trains. The softmax attention weight $\alpha_j = \frac{\exp(\mathbf{q}^\top \mathbf{k}_j)}{\sum_{i=1}^n \exp(\mathbf{q}^\top \mathbf{k}_i)}$ can be approximated with error:
$|\hat{\alpha}_j - \alpha_j| = O(\sqrt{nd_k \log(nd_k)/T} + d_k^2/\sqrt{T} + n/\sqrt{T})$.
\end{lemma}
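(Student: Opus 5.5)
The plan is to build the softmax weight as a three-stage pipeline: estimate the logits, exponentiate them, and normalize. Each stage has its own error term, and the final bound is their sum.

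\textbf{Stage 1 (dot products).} Write $z_j = \mathbf{q}^\top\mathbf{k}_j = \sum_{m=1}^{d_k} q_m k_{j,m}$. Each product $q_m k_{j,m}$ is computed by a two-way spike coincidence $s_t^{(q_m)} s_t^{(k_{j,m})}$, as in Stage 2 of Lemma~\ref{lem:exp_approx}. If the spike trains are independent, this has expected rate exactly $q_m k_{j,m}$.
\begin{itemize}[leftmargin=*]
\item For each of the $nd_k$ coincidence rates, Lemma~\ref{lem:spike_approx} gives $\mathbb{P}[|\cdot| > \delta] \le 2\exp(-2T\delta^2)$.
\item A union bound over all $nd_k$ pairs yields, with high probability, a uniform per-entry deviation of $\delta = O(\sqrt{\log(nd_k)/T})$.
\end{itemize}
The precise bookkeeping that gives the stated first term is this. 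Summing $d_k$ entries per logit and carrying the $n$ logits through normalization, the accumulated logit-level error is bounded, after the union bound, by $O(\sqrt{nd_k\log(nd_k)/T})$. Since $z_j \in [0,d_k]$ and the map $z \mapsto \alpha$ is $1$-Lipschitz in $\ell_\infty$, this error transfers directly to $\alpha_j$. That gives the first term.

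\textbf{Stage 2 (exponentiation).} Apply Lemma~\ref{lem:exp_approx} with $M = d_k$, since the logits lie in $[0,d_k]$. By that lemma's Stage 3 variance computation, the finite-$T$ readout error is $O(M/\sqrt{T})$ up to the constant factor that the lemma absorbs. We then pass this through normalization.
\begin{itemize}[leftmargin=*]
\item Divide by $\sum_i e^{z_i}$ and use $e^{z_j}/\sum_i e^{z_i} \le 1$.
\item Relative error in $\hat e_j$ then contributes at most about twice the relative error to $\hat\alpha_j$.
\item Tracking the dependence on $M = d_k$ through the Taylor weights $w_j$ yields a term of order $d_k^2/\sqrt{T}$ after constants are absorbed.
\end{itemize}
I plan to treat the $e^{4M}$ factor as a constant for fixed $d_k$, consistent with the paper's own convention in Lemma~\ref{lem:exp_approx}.

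\textbf{Stage 3 (normalization).} Feed the spike-encoded $\hat e_i$, rescaled into $[\rho,1-\rho]$ so that Assumption~\ref{ass:regularity} holds, into the WTA circuit of Lemma~\ref{lem:wta_formal}. That lemma contributes $O(n/\sqrt{T})$ directly. A triangle inequality across the three stages, applied on the high-probability event from the union bound, combines the three terms into the stated bound.

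The main obstacle is Stage 2's error propagation. Exponentiation amplifies logit errors by $e^{z}$, and the constant $e^{4M}$ in Lemma~\ref{lem:exp_approx} must be absorbed rather than tracked if we want the polynomial $d_k^2/\sqrt{T}$ form. My first attempt will use the ratio structure of softmax, which is shift-invariant: subtract $\max_i z_i$, which is computable via the WTA winner, so that all exponents lie in $[-d_k,0]$. This keeps the $e^{z}$ values bounded by $1$ and leaves only polynomial dependence from the Taylor-coefficient variance. A secondary concern is independence between coincidence spike trains; we will enforce it by using separate encoder noise sources for each copy.
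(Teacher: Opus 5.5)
\textbf{Gap in Stage~2.} Your three-stage pipeline is exactly the paper's proof: coincidence inner products, then Lemma~\ref{lem:exp_approx}, then the WTA circuit of Lemma~\ref{lem:wta_formal}, glued together by the triangle inequality and Lipschitz continuity of softmax. However, nothing you propose actually establishes Stage~2, which you rightly call the crux. Tracking the weights $w_j=(2M)^j/j!$ through the variance computation of Lemma~\ref{lem:exp_approx} gives only the standard-deviation bound $e^{2M}/(2\sqrt{T})=e^{2d_k}/(2\sqrt{T})$, not $d_k^2/\sqrt{T}$. Absorbing $e^{4d_k}$ into the constant then makes the explicit $d_k$-dependence of the bound vacuous. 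The paper has the same hole: its proof implicitly uses the \emph{statement} $T=O(M^2/\delta^2)$ of Lemma~\ref{lem:exp_approx}, which that lemma's proof does not support.

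\textbf{Why the max-shift does not fix it as described.} The variance is governed by $\sum_j w_j$, i.e.\ by the width of the encoding interval, not by the size of $e^{z}$. Moving $z$ into $[-d_k,0]$ while keeping the lemma's construction still leaves an $e^{\Theta(d_k)}$ factor in the bound. What works is a modified construction. Encode $u_i=\hat z_i-\max_k \hat z_k\in[-d_k,0]$ as $r_i=(u_i+d_k)/d_k$ and read out $\sum_{j\le J}e^{-d_k}\frac{d_k^j}{j!}\,\bar s^{(C_j)}_i$. These weights sum to at most $1$, so each exponential has standard deviation at most $1/(2\sqrt{T})$, plus a truncation error controlled by $J$, with no $d_k$-dependence. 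This route needs two further changes:
\begin{itemize}
\item Your relative-error bullet breaks, because entries can be as small as $e^{-d_k}$. You must instead propagate absolute errors using $\sum_i e^{u_i}\ge 1$.
\item Lemma~\ref{lem:wta_formal} computes a normalization, not a maximum, so you need a separate max circuit, e.g.\ a ReLU tournament via Lemma~\ref{lem:relu_approx}. It only needs $O(1)$ accuracy, because softmax is exactly shift-invariant.
\end{itemize}

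\textbf{Simpler route at the paper's level.} If you take Lemma~\ref{lem:exp_approx}'s statement at face value, as the paper does, no shift is needed. The absolute error is $O(d_k/\sqrt{T})$, and $e^{\hat z_i}\ge 1$ because $\hat z_i\in[0,d_k]$. So every $\hat e_i$ has relative error at most some $\epsilon=O(d_k/\sqrt{T})$. The normalized output then differs from $\mathrm{softmax}(\hat z)_j$ by at most $2\epsilon/(1-\epsilon)=O(d_k/\sqrt{T})\subseteq O(d_k^2/\sqrt{T})$, with no factor of $n$.

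\textbf{Smaller repairs.}
\begin{itemize}
\item In Stage~1, a per-entry union bound followed by summing $d_k$ entries gives $d_k\sqrt{\log(nd_k)/T}$. This exceeds $\sqrt{nd_k\log(nd_k)/T}$ when $d_k>n$. The $\sqrt{n}$ also cannot come from normalization: softmax is $\tfrac12$-Lipschitz from $\ell_\infty$ to $\ell_\infty$, since the rows of $|\mathrm{diag}(\alpha)-\alpha\alpha^\top|$ sum to $2\alpha_i(1-\alpha_i)$. Instead, under your independence assumption, apply Hoeffding to the $Td_k$ coincidence indicators of each logit and union-bound over the $n$ logits. This gives $O(\sqrt{d_k\log n/T})$, which lies inside the first stated term.
\item In Stage~3, no common multiplicative rescaling puts values of dynamic range $e^{d_k}$ into $[\rho,1-\rho]$ unless $\rho\le e^{-d_k}$. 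Drop that requirement: the Hoeffding argument behind Lemma~\ref{lem:spike_approx} never uses the $\rho$ margin.
\end{itemize}
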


\begin{proof}
Combine: (1) Inner products via coincidence (Lemma~\ref{lem:spike_approx}); (2) Exponentials (Lemma~\ref{lem:exp_approx}); (3) Normalization (Lemma~\ref{lem:wta_formal}). Error propagation via triangle inequality and Lipschitz continuity of softmax yields the bound.
\end{proof}

\begin{theorem}[Universal Approximation]\label{thm:universal}
Let $f \in \mathcal{F}_{PE}$ satisfy Assumption~\ref{ass:regularity}. For any $\varepsilon > 0$, there exists a spiking transformer with $L = O(\log(1/\varepsilon))$ layers, $H = O(n^2 d / \varepsilon^2)$ heads, and $T = O(n^2 d_k^2/\varepsilon^2)$ timesteps such that:
$\sup_{\mathbf{X} \in \mathcal{K}} \|f(\mathbf{X}) - f_{\text{spike}}(\mathbf{X})\|_F < \varepsilon$.
\end{theorem}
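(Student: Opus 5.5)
The plan is a two-stage reduction: first approximate $f$ by a conventional transformer, then simulate that transformer component by component with spike circuits, splitting the error budget $\varepsilon/2 + \varepsilon/2$.

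\textbf{Stage 1 (reduction to a conventional transformer).} Invoke the universal approximation theorem of Yun et al.\ \cite{yun2020transformers}. It gives, for $f\in\mathcal{F}_{PE}$ on the compact $\mathcal{K}$, a standard transformer $g$ with softmax attention and ReLU feed-forward layers such that $\sup_{\mathcal{K}}\|f-g\|_F<\varepsilon/2$. That construction works in three steps: it quantizes the input to a grid of mesh $\delta\sim\varepsilon/(L_f\sqrt{nd})$ using ReLU feed-forward layers, builds a contextual mapping with attention, and then applies a value-mapping feed-forward layer. The Lipschitz constant $L_f$ from Assumption~\ref{ass:regularity} controls how fine the grid must be. I would organize the quantization and contextual mapping hierarchically, refining the grid geometrically, so that the depth is $L=O(\log(1/\varepsilon))$. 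The many selective-shift attention operations needed to separate grid cells are placed in parallel heads, which gives $H=O(n^2d/\varepsilon^2)$ heads in total. Throughout, I track a uniform bound $B$ on activations and the Lipschitz constant of each layer of $g$.

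\textbf{Stage 2 (spike simulation).} Each component of $g$ is replaced by its spiking counterpart:
\begin{itemize}[leftmargin=*]
\item Linear maps act directly on spike rates, with encoding error controlled by Lemma~\ref{lem:spike_approx}. Assumption~\ref{ass:regularity}(3) keeps the rates inside $[\rho,1-\rho]$ after an affine rescaling.
\item ReLU feed-forward units are realized via Lemma~\ref{lem:relu_approx}, with error $O(B/\sqrt{T})$.
\item Each softmax head is realized via Lemma~\ref{lem:softmax_approx}. This lemma already combines coincidence detectors for inner products, Lemma~\ref{lem:exp_approx} for exponentials with $M=d_k$ (since $\mathbf{q}^\top\mathbf{k}\in[0,d_k]$), and the WTA normalization of Lemma~\ref{lem:wta_formal} and Theorem~\ref{thm:wta}.
\end{itemize}
The per-head error is therefore $O(\sqrt{nd_k\log(nd_k)/T}+d_k^2/\sqrt{T}+n/\sqrt{T})$. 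The dominant term combines as $O(nd_k/\sqrt{T})$, which is at most $\varepsilon'$ once $T=O(n^2d_k^2/\varepsilon'^2)$. This matches the stated timestep count, provided $\varepsilon'$ is a constant fraction of $\varepsilon$.

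\textbf{Main obstacle: error propagation across layers and heads.} Let $e_\ell$ denote the error after layer $\ell$ and $\Lambda$ the per-layer Lipschitz constant. A telescoping argument gives
\[
e_\ell\le \Lambda e_{\ell-1}+\eta ,
\]
where $\eta$ is the per-layer spike error. With $L=O(\log(1/\varepsilon))$ layers, this yields $e_L\le \eta\Lambda^L/(\Lambda-1)$, which is only $\mathrm{poly}(1/\varepsilon)$. That blowup would inflate $T$ beyond the claimed bound.

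My first attempt to control it is to exploit the quantization structure of Yun et al. After the first quantization layer, every intermediate representation lies on a finite grid separated by gaps of order $\delta$. So provided each layer's spike error is below $\delta/4$, a snapping step (a steep ReLU pair) removes the accumulated error at every layer. With this, errors do not compound, and only the final value-mapping layer contributes, at most $\varepsilon/2$. The probabilistic errors across the $H$ heads and $L$ layers are handled with a union bound. This costs only a logarithmic factor in $T$, absorbed into the $O(\cdot)$ by the $\log$ term already present in Lemma~\ref{lem:softmax_approx}.

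If the snapping argument fails, because inputs near cell boundaries are misquantized, the fallback is to restrict to a measure-$(1-\varepsilon)$ subset of $\mathcal{K}$ and recover the supremum bound by continuity of $f$ on $\mathcal{K}$. I expect rigorously justifying that this fallback still yields a sup-norm bound to be the weakest point of the plan.
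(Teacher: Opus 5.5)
\textbf{Verdict: the proposal has genuine gaps.} Your outline matches the paper's. The paper cites Yun et al.\ for a standard transformer, then replaces its components by the spike circuits of Lemmas~\ref{lem:spike_approx}--\ref{lem:softmax_approx}. It finishes by giving each layer an error budget $O(\varepsilon/(3L))$, with $T=O(n^2d_k^2L^2/\varepsilon^2)$ in its Step~2(a), and adding the $L$ contributions. So it never confronts the Lipschitz amplification you correctly flag.

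Your repair of that step fails. In Yun et al.'s construction, the token representation after the contextual mapping is a scalar $u^\top(\cdot)$ produced by nested selective shifts with multipliers $\delta^{-d}$. It ranges over an interval of length $\delta^{-\Theta(nd)}$, distinct contexts are separated by as little as $\delta$, and the value-mapping layers must resolve exactly that separation. A rate code over $T$ steps has at most $T+1$ count levels per neuron. Stochastically it carries error $\Theta(B/\sqrt{T})$ on a quantity of range $B$. Keeping every layer's error below $\delta/4$ therefore forces $T\gtrsim(B/\delta)^2=\delta^{-\Theta(nd)}$.

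Your own $O(B/\sqrt{T})$ ReLU term shows this as soon as you compute $B$ rather than just track it. Even in the quantization layers, error below $\delta/4$ with $\delta\sim\varepsilon/(L_f\sqrt{nd})$ multiplies $T$ by $L_f^2nd$. Snapping thus trades a polynomial blowup for an exponential one.

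There are two smaller slips in the same step:
\begin{itemize}
\item Collapsing the per-head error to $O(nd_k/\sqrt{T})$ needs $d_k\le n$. Otherwise the $d_k^2/\sqrt{T}$ term requires $T\gtrsim d_k^4/\varepsilon^2$.
\item A union bound over $HL=\mathrm{poly}(1/\varepsilon)$ events costs a $\log(1/\varepsilon)$ factor that the stated $T$ lacks. You already discarded the logarithm of Lemma~\ref{lem:softmax_approx} when bounding it by $nd_k$, so it cannot absorb this.
\end{itemize}

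Stage~1 is not supported as written either. Yun et al.\ prove an $L^p$ statement, $d_p(f,g)\le\varepsilon$, not a sup-norm one, because a continuous ReLU quantizer misbehaves on thin sets near cell boundaries. Their construction uses order $d/\delta$ quantization layers, $\delta^{-d}+1$ sequential attention layers with two heads each, and order $n\delta^{-nd}/n!$ value-mapping layers. That is depth exponential in $nd$, not $O(\log(1/\varepsilon))$ layers with $O(n^2d/\varepsilon^2)$ heads.

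Your plan to refine geometrically and run the shifts in parallel heads has no mechanism behind it. Each selective shift uses the max and min produced by the previous ones, and that sequential dependence is what makes the contextual map injective. The paper's proof asserts the same counts by citation, so you do not diverge from it here, but the citation does not deliver them.

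The fallback cannot yield a sup bound either. On the misquantized set the contextual map receives off-grid inputs, so the value map's output is uncontrolled and $f_{\text{spike}}(\mathbf{X})$ need not be near $f(\mathbf{X})$. Continuity of $f$ controls $f$, not the approximant. This boundary defect is already present in $g$ before any spikes enter.

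A rigorous version would need to start from a uniform-approximation theorem with an explicit per-layer Lipschitz constant $\Lambda$ and activation bound $B$. Your recursion then demands per-layer error of order $\varepsilon/\Lambda^{L}$. That gives $T$ of order $(B\,nd_k\Lambda^{L}/\varepsilon)^2$, not the stated $O(n^2d_k^2/\varepsilon^2)$.
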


\begin{proof}
\textbf{Step 1 (Standard Transformer Approximation):} By \cite{yun2020transformers}, any $f \in \mathcal{F}_{PE}$ is $\varepsilon/3$-approximated by a standard transformer $f_{\text{std}}$ with $O(\log(1/\varepsilon))$ layers and $O(n^2 d/\varepsilon^2)$ heads.

\textbf{Step 2 (Spiking Transformer Construction):} Construct spiking transformer $f_{\text{spike}}$ approximating $f_{\text{std}}$:
(a) \emph{Activations:} Encode all activations via spike rates with $T = O(n^2 d_k^2 L^2/\varepsilon^2)$ timesteps (Lemma~\ref{lem:spike_approx}).
(b) \emph{Attention:} Replace softmax attention with spike-based attention using Lemmas~\ref{lem:exp_approx}--\ref{lem:softmax_approx}. The hierarchical coincidence circuit computes exponentials; the WTA circuit implements normalization.
(c) \emph{MLPs:} Replace transformer MLP blocks with spiking layers. By Lemma~\ref{lem:relu_approx}, ReLU activations are implemented by LIF neurons with appropriate thresholds.

\textbf{Step 3 (Error Accumulation):} Each layer achieves $O(\varepsilon/(3L))$ error. With $L = O(\log(1/\varepsilon))$ layers, total error is bounded by $\varepsilon/3 + L \cdot O(\varepsilon/(3L)) < \varepsilon$.
\end{proof}

\section{Spike-Count Lower Bounds}

We establish fundamental lower bounds via rate-distortion theory.

\begin{theorem}[Spike-Count Lower Bound]\label{thm:lower_bound}
Let $f: [0,1]^{n \times d} \to [0,1]^{n \times d}$ satisfy Assumption~\ref{ass:regularity} with Lipschitz constant $L_f$. Any spiking attention achieving $\varepsilon$-approximation requires:
$\mathbb{E}[\mathcal{N}_{\text{total}}] = \Omega(L_f^2 nd/\varepsilon^2)$.
\end{theorem}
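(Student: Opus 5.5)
The plan is a rate-distortion argument in three steps: reduce output accuracy to input accuracy, lower-bound the information the spike trains must carry, and convert spike counts into an information budget.

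\textbf{Step 1: reduce output error to input error.} Pick a hard instance of $f$ that nearly saturates the Lipschitz constant in every coordinate. For example, take $f$ locally of the form $f(\mathbf{X}) = \frac{1}{2}\mathbf{1} + L_f(\mathbf{X}-\mathbf{X}_0)$, clipped to $[0,1]$, on a small cube of side $\Theta(1/L_f)$ around an interior point $\mathbf{X}_0$. This map is permutation-equivariant after symmetrizing the base point, and on that cube it is bi-Lipschitz with constant $L_f$. Hence any estimator achieving $\|f(\mathbf{X})-\hat f(\mathbf{X})\|_F\le\varepsilon$ also recovers $\mathbf{X}$ to accuracy $\varepsilon/L_f$ in Frobenius norm. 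The task is thereby reduced to estimating $nd$ real parameters from the spike trains. Each parameter lives on an interval of length $\Theta(1/L_f)$, and each must be recovered to typical per-coordinate accuracy $\varepsilon/(L_f\sqrt{nd})$, or to $\varepsilon/L_f$ if the error is measured per coordinate.

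\textbf{Step 2: two complementary lower bounds.} The crude rate-distortion bound only gives $\Omega(nd\log(1/\varepsilon))$ bits, which is too weak to produce the $1/\varepsilon^2$ factor. The $1/\varepsilon^2$ must instead come from the statistical nature of rate coding (Definition~\ref{def:spike_rate}). For each coordinate I would place a uniform prior on a window of width $\Theta(\varepsilon/L_f)$. A Fano or Le Cam two-point argument then applies: distinguishing two firing rates $x$ and $x+\varepsilon/L_f$ from a Bernoulli spike train with rates in $[\rho,1-\rho]$ (Assumption~\ref{ass:regularity}) requires a KL divergence of $T\cdot\Theta((\varepsilon/L_f)^2/(\rho(1-\rho)))$ of order one. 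Equivalently, by the Cram\'er--Rao bound, the Fisher information per timestep is $1/(x(1-x))$. This forces $T=\Omega(L_f^2/\varepsilon^2)$ timesteps per coordinate. Because the rate is bounded below by $\rho$, the expected number of spikes per coordinate is at least $\rho T=\Omega(L_f^2/\varepsilon^2)$.

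\textbf{Step 3: sum over coordinates.} The $nd$ coordinates are encoded by independent trains in rate coding. The coordinates carry independent priors, so each must be resolved separately, and an averaging argument shows that a constant fraction of the coordinates each incur the per-coordinate bound. Summing over these coordinates gives $\mathbb{E}[\mathcal{N}_{\text{total}}]=\Omega(L_f^2nd/\varepsilon^2)$. This step can be phrased through the rate-distortion function of a product source, $R(D)=\sum_i R_i(D_i)$, combined with the fact that the "information per spike" is bounded, namely $O(1)$ Fisher units per spike at rates bounded away from 0 and 1.

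\textbf{Main obstacle.} The hardest part is normalizing the error consistently. If the $\varepsilon$-approximation is in Frobenius norm, the error budget is shared across the $nd$ coordinates, and a naive count gives $(nd)^2$ or $nd\cdot\varepsilon^{-2}$ depending on normalization. I would fix the per-coordinate (sup or RMS-normalized) error so that the product yields exactly $nd\,L_f^2/\varepsilon^2$. A second difficulty is ensuring the bound covers arbitrary spiking circuits, not only i.i.d. rate codes. Here I would apply the data-processing inequality: any downstream attention computation is a function of the input spike trains, so it cannot exceed the Fisher information those trains carry.
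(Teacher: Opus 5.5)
Your outline matches the paper's: input resolution $\varepsilon/L_f$, then $T=\Omega(L_f^2/\varepsilon^2)$ from the $1/\sqrt{T}$ noise of rate estimation, then $\mathbb{E}[\mathcal{N}_{\text{total}}]\ge nd\,T\,\bar s$ with $\bar s\ge\rho$. You justify the two load-bearing steps differently, and in both places your version is the one that works.

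\textbf{Step~1.} The paper infers from Lipschitz continuity that resolution $\varepsilon/L_f$ is \emph{necessary}, but Lipschitz continuity only makes it \emph{sufficient}. Your bi-Lipschitz hard instance supplies the missing lower bound on sensitivity. The cost is that you prove a worst-case statement over the class rather than one for every $f$. That cost cannot be avoided. Any $f$ within $\varepsilon/2$ of a constant is $\varepsilon$-approximated by a circuit that ignores its input, and such an $f$ can still have optimal Lipschitz constant $L_f$ through small fast oscillations.

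\textbf{The $1/\varepsilon^2$ factor.} The paper frames the bound as rate-distortion, but its Step~2 yields only $R(D)=nd\log_2(L_f/\varepsilon)$ bits and is never used afterward. As you observe, no $1/\varepsilon^2$ can come from it. The real engine is the paper's Step~3 ``variance constraint,'' and your Le Cam/Cram\'er--Rao argument for Bernoulli trains is its rigorous form.

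Two adjustments:
\begin{itemize}
\item The normalization you call the main obstacle is harmless. Under Frobenius error, an Assouad-type version of your Step~2 with the shared $T$ forces $T=\Omega(nd\,L_f^2/\varepsilon^2)$. That gives $\Omega((nd)^2L_f^2/\varepsilon^2)$ spikes, which implies the stated bound, so you need not tune the norm to land on it exactly.
\item Data processing covers arbitrary \emph{downstream} circuitry but not the encoder. So i.i.d.\ Bernoulli input trains must be an explicit hypothesis; the paper relies on this silently. Definition~\ref{def:spike_rate} fixes only the mean count. An integrate-and-fire encoder with uniformly random phase satisfies it exactly, yet its empirical rate errs by at most $1/T$ rather than order $1/\sqrt{T}$. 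No $1/\varepsilon^2$ can be extracted from such an encoding.
\end{itemize}
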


\begin{proof}
\textbf{Step 1 (Required Input Resolution):} By Lipschitz continuity, distinguishing outputs with error $\varepsilon$ requires input resolution $\varepsilon/L_f$.

\textbf{Step 2 (Rate-Distortion):} For source $X \sim \text{Uniform}([0,1]^{nd})$ at distortion $D = (\varepsilon/L_f)^2$, rate-distortion function \cite{cover2006elements} gives $R(D) = nd \cdot \log_2(L_f/\varepsilon)$ bits.

\textbf{Step 3 (Spike Train Capacity):} Precision $\delta$ requires $T = \Omega(1/\delta^2)$ timesteps (variance constraint). With $\delta = \varepsilon/L_f$: $T = \Omega(L_f^2/\varepsilon^2)$.

\textbf{Step 4 (Total Spike Count):} $\mathbb{E}[\mathcal{N}_{\text{total}}] \geq nd \cdot T \cdot \bar{s} = \Omega(L_f^2 nd/\varepsilon^2)$.
\end{proof}

\begin{corollary}[Energy-Accuracy Tradeoff]\label{cor:energy}
Energy satisfies $E = \Omega(L_f^2 nd/\varepsilon^2) \cdot E_{\text{SOP}}$ where $E_{\text{SOP}} \approx 0.2$ pJ (14nm Loihi).
\end{corollary}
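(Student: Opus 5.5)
The corollary should follow directly from Theorem~\ref{thm:lower_bound} once we fix an energy model for neuromorphic hardware. I would use the standard synaptic-operation (SOP) accounting. On event-driven hardware such as Loihi, energy is consumed only when a spike is delivered across a synapse. Each emitted spike therefore triggers at least one synaptic operation, and each operation costs a fixed $E_{\text{SOP}}$. This gives the first step, the deterministic per-realization inequality
\[
E \;\geq\; \mathcal{N}_{\text{SOP}}\, E_{\text{SOP}} \;\geq\; \mathcal{N}_{\text{total}}\, E_{\text{SOP}},
\]
where $\mathcal{N}_{\text{SOP}} = \sum_{\text{spikes}} (\text{fan-out})$. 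Fan-out is at least one for any spike that influences the output, and spikes that do not influence the output can be pruned without changing the approximation error.

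The second step is to take expectations over the stochastic spike trains. Since $E_{\text{SOP}}$ is a constant, linearity gives $\mathbb{E}[E] \geq E_{\text{SOP}}\,\mathbb{E}[\mathcal{N}_{\text{total}}]$. Applying Theorem~\ref{thm:lower_bound} to any spiking attention that $\varepsilon$-approximates $f$ under Assumption~\ref{ass:regularity} then yields $\mathbb{E}[E] = \Omega(L_f^2 nd/\varepsilon^2)\cdot E_{\text{SOP}}$, which is the claimed statement with $E$ read as expected energy. The constant $E_{\text{SOP}}\approx 0.2$ pJ is a hardware parameter quoted from 14nm Loihi measurements, not something to derive, so I would simply cite it.

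The main obstacle is justifying the energy model rather than the algebra. I need to argue that static and leakage power, and the membrane-update cost of non-spiking neurons, can only add to $E$. They are nonnegative, so dropping them preserves a lower bound. I also need fan-out $\geq 1$ for every useful spike. If one instead wanted the bound in terms of total SOPs with average fan-out $k$, the same argument gives the stronger $\Omega(k L_f^2 nd/\varepsilon^2) E_{\text{SOP}}$. I would note this refinement but state the corollary with $k\geq 1$ so that it rests only on Theorem~\ref{thm:lower_bound}.
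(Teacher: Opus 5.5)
Your proposal is correct and is essentially the paper's argument. The paper gives no separate proof and treats the corollary as the immediate consequence of multiplying the expected spike-count bound of Theorem~\ref{thm:lower_bound} by a per-spike synaptic-operation cost $E_{\text{SOP}}$, with the $0.2$~pJ value simply cited as a hardware constant; your extra steps (dropping nonnegative static and leakage costs, requiring fan-out at least one, and reading $E$ as expected energy) make explicit the energy model the paper leaves implicit.
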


\section{Input-Dependent Bounds}

\begin{definition}[Effective Dimension]\label{def:eff_dim}
For input distribution $\mathcal{D}$ over $[0,1]^{n \times d}$, the effective dimension $d_{\text{eff}}(\mathcal{D}, \varepsilon)$ is the minimum $k$ such that inputs can be embedded in a $k$-dimensional subspace with reconstruction error at most $\varepsilon$. Equivalently, $d_{\text{eff}}$ is the number of principal components capturing $1-\varepsilon^2$ of variance.
\end{definition}

\begin{theorem}[Input-Dependent Spike Bound]\label{thm:input_dependent}
Let $\mathcal{D}$ have effective dimension $d_{\text{eff}} = d_{\text{eff}}(\mathcal{D}, \varepsilon/L_f)$. For inputs from $\mathcal{D}$, achieving expected $\varepsilon$-approximation requires:
$\mathbb{E}[\mathcal{N}_{\text{total}}] = \Omega(L_f^2 \cdot d_{\text{eff}}/\varepsilon^2)$.
\end{theorem}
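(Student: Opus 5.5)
The plan is to rerun the four-step argument of Theorem~\ref{thm:lower_bound}, replacing the ambient dimension $nd$ by the intrinsic dimension of $\mathcal{D}$ wherever a dimension count enters. Fix $\delta = \varepsilon/L_f$. Let $\Pi$ be the projection onto the top-$d_{\text{eff}}$ principal subspace of $\mathcal{D}$ given by Definition~\ref{def:eff_dim} at scale $\delta$. Write $\mathbf{X} = \Pi\mathbf{X} + (\mathbf{X} - \Pi\mathbf{X})$. By definition, the residual has expected squared norm at most $\delta^2$. So by Lipschitz continuity it moves $f$ by at most $L_f\delta = \varepsilon$ in expectation. The task is therefore governed, up to constants, by the $d_{\text{eff}}$ coordinates $\mathbf{Z} = \Pi\mathbf{X}$.

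\textbf{Step 1 (reduction to a $d_{\text{eff}}$-dimensional source).} Any spiking attention with expected error $\le c\,\varepsilon$ on $\mathcal{D}$ must encode $\mathbf{Z}$ to mean-squared distortion $O(\delta^2)$ per retained direction. I would argue this by contradiction. If the spike representation left a retained principal direction with variance above its share of the $1-\delta^2$ captured variance unresolved, then, for the worst-case $f$ in the class, one of slope $L_f$ along that direction, the Lipschitz bound becomes tight. This forces output error above $\varepsilon$. The argument mirrors Step 1 of Theorem~\ref{thm:lower_bound}, restricted to $\mathrm{range}(\Pi)$.

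\textbf{Step 2 (rate-distortion and per-coordinate timesteps).} Next apply the rate-distortion bound to $\mathbf{Z}$, treating its $d_{\text{eff}}$ coordinates like the uniform source of Step 2 in Theorem~\ref{thm:lower_bound}, but only on $\mathrm{range}(\Pi)$. This gives $R(D) \gtrsim d_{\text{eff}}\log_2(L_f/\varepsilon)$. As in Step 3 there, each of these $d_{\text{eff}}$ effectively independent coordinates must be read from a rate code to precision $\delta$. Lemma~\ref{lem:spike_approx} and the matching variance lower bound $\mathrm{Var}(\bar s) = \bar s(1-\bar s)/T \ge \rho(1-\rho)/T$ from Assumption~\ref{ass:regularity}(3) then force $T = \Omega(L_f^2/\varepsilon^2)$ per coordinate.

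\textbf{Step 3 (counting spikes).} Multiply $d_{\text{eff}}$ coordinates by $T$ timesteps and by the mean rate $\bar s \ge \rho$ to get $\mathbb{E}[\mathcal{N}_{\text{total}}] \ge \rho\, d_{\text{eff}}\, L_f^2/\varepsilon^2 = \Omega(L_f^2 d_{\text{eff}}/\varepsilon^2)$. This is the claim, with the replacement $nd \to d_{\text{eff}}$ explaining the gap in Fig.~\ref{fig:overview}.

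\textbf{Main obstacle.} The hard part is Step 1/2. A spiking network need not encode the principal coordinates as separate rate-coded neurons: it could spread them across many neurons, where averaging reduces variance. The per-coordinate $\Omega(1/\delta^2)$ bound must therefore be stated in terms of total spikes, not timesteps. I would first try a Fisher-information/Cram\'er--Rao argument. Treat spikes as Bernoulli observations with rates bounded in $[\rho,1-\rho]$, so each spike carries $O(1)$ Fisher information about any linear functional of $\mathbf{Z}$. Estimating $d_{\text{eff}}$ orthogonal functionals to variance $\delta^2$ then needs total information $\gtrsim d_{\text{eff}}/\delta^2$, and hence that many spikes. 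This route also avoids relying on the uniform-source assumption, which $\mathcal{D}$ need not satisfy.
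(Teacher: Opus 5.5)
Your reduction is the paper's entire proof: it simply substitutes $nd\to d_{\text{eff}}$ into Theorem~\ref{thm:lower_bound}. So the paper supplies nothing for the step you flag as the main obstacle, and that step is a genuine gap.

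\textbf{Stochastic encoding is assumed, not given.} The per-direction cost $\Omega(1/\delta^2)$ in your Step~2 rests on $\mathrm{Var}(\bar s)=\bar s(1-\bar s)/T$, which requires conditionally independent Bernoulli spikes. Assumption~\ref{ass:regularity}(3) only bounds rates. Definition~\ref{def:spike_rate} only fixes the mean, and it is met by a train with $\lfloor xT\rfloor$ or $\lceil xT\rceil$ spikes (randomized to match the mean), whose rate error is at most $1/T$. The LIF dynamics \eqref{eq:lif_membrane}--\eqref{eq:lif_spike} are likewise deterministic. With such codes $T=O(L_f/\varepsilon)$ per coordinate suffices, so stochastic encoding must be an explicit hypothesis.

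\textbf{Definition~\ref{def:eff_dim} does not force the cost even with Bernoulli spikes.} This is the more serious problem. The definition controls only the variance \emph{outside} $\mathrm{range}(\Pi)$ and says nothing about how $\mathbf{Z}$ is distributed inside it. Let $\mathcal{D}$ make $k$ input coordinates i.i.d.\ uniform on $\{\rho,1-\rho\}$ and hold the rest constant. Then $d_{\text{eff}}(\mathcal{D},\delta)=k$ for every $\delta<\tfrac12-\rho$. Yet thresholding at $\tfrac12$ the empirical rate of $O(\log(k/\delta))$ Bernoulli samples per coordinate errs with probability $O(\delta^2/k)$ by Hoeffding. Hence $\mathbb{E}\|\hat{\mathbf{Z}}-\mathbf{Z}\|^2\le\delta^2$ with only $O(k\log(kL_f/\varepsilon))$ spikes, which refutes ``each retained coordinate costs $\Omega(L_f^2/\varepsilon^2)$.'' Separately, a retained direction of variance at most $\delta^2$ is resolved to precision $\delta$ by its mean at no cost, and Definition~\ref{def:eff_dim} allows such directions.

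\textbf{The Fisher-information fix relocates the distributional assumption rather than removing it.} Cram\'er--Rao binds only unbiased estimators. Its Bayesian form (van Trees) gives $\mathbb{E}(\hat z_k-z_k)^2\ge 1/(\bar F_k+J_k)$. Here $u_k$ is the $k$-th retained principal direction, $\bar F_k=\mathbb{E}[u_k^\top F u_k]$ with $F$ the Fisher information matrix of the spike record, and $J_k$ is the Fisher information of the law of $\mathbf{Z}$ along $u_k$. In the example above $J_k$ is undefined (infinite). Also, bounded rates do not give ``$O(1)$ information per spike''; that needs the rates to be $O(1)$-Lipschitz in $\mathbf{Z}$. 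A linear layer before the neurons can stretch a direction of width $w_k$ across $[\rho,1-\rho]$, giving per-sample information $\asymp 1/w_k^2$ and cost only $\asymp w_k^2/\delta^2$ for that direction.

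\textbf{Hypotheses under which your route closes.} (a) $\mathbf{X}$ influences the spike record only through conditionally independent Bernoulli spikes with $O(1)$-Lipschitz rates in $[\rho,1-\rho]$; by data processing, $\mathrm{tr}\,F$ is then at most a constant times the number of samples. (b) The law of $\Pi\mathbf{X}$ dominates $c\,q$ for some density $q$ with $O(1)$ Fisher information along each $u_k$. This holds for the uniform cube behind Theorem~\ref{thm:lower_bound} but is not implied by Definition~\ref{def:eff_dim}. (c) Use your worst-case $f$ from Step~1 to force per-direction error $O(\delta^2)$. Under these:
\begin{itemize}
\item the readout's risk under $q$ is at most $\delta^2/c$;
\item van Trees gives $\bar F_k\gtrsim 1/\delta^2$ for each $k\le d_{\text{eff}}$;
\item orthonormality gives $\sum_k\bar F_k\le\mathbb{E}\,\mathrm{tr}\,F$, so the number of samples is $\gtrsim d_{\text{eff}}/\delta^2$;
\item rates $\ge\rho$ convert samples into spikes.
\end{itemize}

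Drop the rate-distortion step. A count of $d_{\text{eff}}\log_2(L_f/\varepsilon)$ bits cannot produce a $1/\varepsilon^2$ spike count, and its uniform-source premise fails for $\mathbf{Z}$ anyway.
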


\begin{proof}
Since inputs from $\mathcal{D}$ lie within $\varepsilon/L_f$ of a $d_{\text{eff}}$-dimensional subspace, apply Theorem~\ref{thm:lower_bound} with $nd \to d_{\text{eff}}$.
\end{proof}

\textbf{Measured Effective Dimensions:} We measure $d_{\text{eff}}$ via PCA on training data, counting components capturing 95\% of variance (mean $\pm$ std over 5 random 80\% subsamples):
\begin{itemize}[leftmargin=*,itemsep=1pt,topsep=1pt]
\item CIFAR-10: $nd = 3{,}072$, $d_{\text{eff}} = 47 \pm 3$, compression ratio $= 65\times$
\item CIFAR-100: $nd = 3{,}072$, $d_{\text{eff}} = 68 \pm 4$, compression ratio $= 45\times$
\item ImageNet-1K: $nd = 150{,}528$, $d_{\text{eff}} = 89 \pm 7$, compression ratio $= 1{,}691\times$
\item SST-2: $nd = 768 \times 128$, $d_{\text{eff}} = 52 \pm 5$, compression ratio $= 1{,}890\times$
\end{itemize}
These measured values explain the theory-practice gap: with $d_{\text{eff}}/nd \approx 1/50$ to $1/1700$, practical architectures need far fewer spikes than worst-case bounds predict, accounting for why $T=4$ succeeds despite theoretical $T \geq 10{,}000$.

\section{Circuit Complexity Characterization}

\begin{theorem}[Circuit Complexity]\label{thm:circuit}
Spiking self-attention with $\mathcal{N}_{\text{total}} = O(n^c)$ for constant $c$ lies in $\mathsf{TC}^0$ with depth $O(1)$ and size $O(n^{2c})$.
\end{theorem}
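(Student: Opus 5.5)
The plan is to simulate the whole spiking self-attention computation, unrolled over $T$ timesteps, by a constant-depth threshold circuit. The total spike budget $\mathcal{N}_{\text{total}} = O(n^c)$ will control both the number of timesteps that matter and the bit-precision of every quantity involved. First I fix the model of computation. The input is the binary spike tensor $\mathbf{S}^X \in \{0,1\}^{n\times d\times T}$, with $d$ and $d_k$ treated as constants or at most polynomial in $n$. Weights $\mathbf{W}^Q,\mathbf{W}^K,\mathbf{W}^V$ and thresholds are rational with $O(\log n)$ bits. Since every spike costs at least one unit of $\mathcal{N}_{\text{total}}$, only $O(n^c)$ input bits are nonzero. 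I will therefore assume $T \le \mathrm{poly}(n)$, either because $T \le \mathcal{N}_{\text{total}}$ under the bounded-rate condition of Assumption~\ref{ass:regularity}(3) or because silent timesteps can be dropped. This gives circuit input length $N = ndT = O(n^{c})$ up to constant factors, and all counts and sums are integers of $O(\log n)$ bits.

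The circuit is then built in layers, each using a standard $\mathsf{TC}^0$ primitive. (i) The currents $\mathbf{S}^X_t\mathbf{W}^Q$ are iterated additions of $O(\log n)$-bit numbers, which lie in $\mathsf{TC}^0$. (ii) For the LIF neuron \eqref{eq:lif_membrane}--\eqref{eq:lif_spike}, unrolling gives $u_t = \sum_{\tau\le t}\beta^{t-\tau}(I_\tau - v_{\text{th}} s_{\tau-1})$. Each spike $s_t$ depends on earlier spikes, which is the sequential obstacle. To remove it, I use the fact that for a fixed neuron there are at most $O(n^c)$ timesteps. For each candidate time $t$ I express $s_t$ as a threshold of a weighted sum of inputs and previous spikes. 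The reset term only subtracts $v_{\text{th}}$ per emitted spike, so the spike count up to $t$ equals a threshold function of the cumulative input (the integrate-and-fire "floor" identity: with $\beta=1$, $\sum_{\tau\le t}s_\tau = \lfloor \sum_{\tau\le t} I_\tau / v_{\text{th}}\rfloor$ up to boundary terms). Iterated addition plus division by a constant are both in $\mathsf{TC}^0$. (iii) The attention matrix $\mathbf{A}=\frac1T\sum_t \mathbf{S}^Q_t(\mathbf{S}^K_t)^\top$ has $n^2$ entries, each a sum of $Td_k$ ANDs of bits, so it is iterated addition again. (iv) Forming $\mathbf{A}\cdot\frac1T\sum_t\mathbf{S}^V_t$ is iterated multiplication and addition of $O(\log n)$-bit integers, also in $\mathsf{TC}^0$. (v) The output SN layer is handled as in (ii). Each stage has constant depth, so the whole circuit has depth $O(1)$. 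For size, the $n^2$ attention entries each sum $O(n^c)$ terms, and the threshold simulations of (ii) need one gate per (neuron, candidate time, spike-count value) triple. This gives $O(n^c)\cdot O(n^c) = O(n^{2c})$ gates.

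The main obstacle is step (ii) when $\beta<1$, where the leak breaks the floor identity. I would first try to show that, at the $O(\log n)$-bit precision fixed above, the geometric weights $\beta^{t-\tau}$ are representable rationals. Then $u_t$ is a polynomial-size weighted threshold, and the dependence on past spikes can be resolved by guessing the spike count $k\le T$ and checking consistency with one threshold gate per $k$, which is the step that squares the size. If exact consistency fails, the fallback is to invoke the rate-encoding approximation of Lemma~\ref{lem:spike_approx}: the circuit computes the rate-level quantity to $1/\mathrm{poly}(n)$ precision, which puts it in $\mathsf{TC}^0$ via iterated arithmetic (matching \cite{merrill2023parallelism}'s log-precision argument).
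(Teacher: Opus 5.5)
\textbf{The gap is in step (ii) for $\beta<1$, and that is the step the theorem depends on.} You are right that the reset term makes $s_t$ depend on $s_1,\dots,s_{t-1}$, so naive unrolling has depth $\Theta(T)$. The paper's own proof unrolls into $T$ threshold comparisons and computes membrane potentials and the entries of $\mathbf{S}^Q_t(\mathbf{S}^K_t)^\top$ by iterated addition. It never addresses this coupling, so it gives constant depth only when $T=O(1)$. Your steps (i), (iii), (iv) coincide with that sketch, but your fix for the recurrence fails.

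Unrolling \eqref{eq:lif_membrane} gives $u_t=\sum_{\tau\le t}\beta^{t-\tau}(I_\tau-v_{\text{th}}s_{\tau-1})$. The reset contribution $v_{\text{th}}\sum_{\tau\le t}\beta^{t-\tau}s_{\tau-1}$ therefore depends on \emph{when} the earlier spikes occurred, not on how many there were. Guessing the count $k$ does not determine $u_t$, so one threshold gate per $k$ cannot certify consistency. What you would actually have to guess is the full spike pattern, and there are $2^T$ of those. Representability of $\beta^{t-\tau}$ is not the issue, since iterated multiplication is in $\mathsf{TC}^0$; the sequential coupling is.

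The paper's model has $\beta\in(0,1)$, so the leaky case is not an edge case but the only case. Your floor identity ($\beta=1$) lies outside the theorem's scope. Even there, it needs currents in $[0,v_{\text{th}}]$, which is what keeps the residual $u_t-v_{\text{th}}s_t$ in $[0,v_{\text{th}})$. With signed weights, negative currents cannot retract spikes already emitted, so ``up to boundary terms'' hides a real argument.

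The fallback does not close the gap either. Computing rate-level quantities to $1/\mathrm{poly}(n)$ precision shows that an idealized real-valued attention is in $\mathsf{TC}^0$, essentially the log-precision transformer result. It does not show that the Boolean map $\mathbf{S}^X\mapsto\mathbf{S}^{\text{out}}$ realized by the LIF network is in $\mathsf{TC}^0$, which requires computing that map exactly. Also, Lemma~\ref{lem:spike_approx} is a probabilistic concentration bound for stochastic rate encoding, not a description of deterministic LIF outputs.

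As written, your argument is complete only in two cases. One is $T=O(1)$, where direct unrolling has depth $O(T)$; this is effectively all the paper's sketch establishes. The other is the out-of-model case $\beta=1$ with currents in $[0,v_{\text{th}}]$. For $\beta<1$ and growing $T$ you need a genuinely new idea. Even if you round the state to $O(\log n)$ bits, each neuron becomes a composition of $T$ input-dependent maps on a polynomial-size state set. Iterated composition of maps is $\mathsf{NC}^1$-hard in general, already for permutations of five points. So you would have to exploit the monotone, contractive structure of the LIF update.

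Separately, dropping silent timesteps is not legitimate when $\beta<1$, because the leak acts during them. Use the bounded-rate condition instead to get $T\le\mathrm{poly}(n)$.
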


\begin{proof}
LIF dynamics unroll into $T$ threshold comparisons. Membrane potential is a weighted sum computable in $\mathsf{TC}^0$ \cite{hesse2001division}. Matrix products sum $O(Td_k)$ binary products, handled by iterated addition in $\mathsf{TC}^0$.
\end{proof}

\begin{corollary}[Computational Limitations]
Spiking attention with bounded spike counts cannot recognize languages outside $\mathsf{TC}^0$ (e.g., PARITY) without scaling spike count with input size.
\end{corollary}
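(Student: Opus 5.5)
The plan is a direct simulation argument: take the spiking attention computation, unroll it over the $T$ timesteps, and write every elementary operation as a threshold gate or an iterated sum of bits. Both of these are classical $\mathsf{TC}^0$ primitives. First, the LIF recursion \eqref{eq:lif_membrane}--\eqref{eq:lif_spike} is unrolled in closed form. We get $u_t = \sum_{\tau\le t}\beta^{t-\tau}(I_\tau - v_{\text{th}} s_{\tau-1})$, so each spike $s_t$ is the threshold predicate $\Theta(u_t - v_{\text{th}})$ applied to a weighted sum of previous inputs and spikes. The weights $\beta$, $v_{\text{th}}$ and the entries of $\mathbf{W}^Q,\mathbf{W}^K,\mathbf{W}^V$ are fixed with $O(\log n)$-bit precision. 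The spike values therefore become majority/threshold gates over polynomially many bits, after each power $\beta^{k}$ is rounded to a polynomial-size integer representation.

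The second step handles the bilinear parts of the attention. The matrix $\mathbf{A} = \frac1T\sum_t \mathbf{S}^Q_t(\mathbf{S}^K_t)^\top$ is a sum of $O(Td_k)$ products of bits. Each product is an AND gate, and the sum is an iterated addition, which is in $\mathsf{TC}^0$. The same holds for $\mathbf{A}\cdot\frac1T\sum_t\mathbf{S}^V_t$: this is iterated multiplication-and-addition of $O(\log n)$-bit numbers, and the division by $T$ is in $\mathsf{TC}^0$ by \cite{hesse2001division}. The final $\text{SN}$ is again a threshold comparison.

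The third step is the budget accounting that yields size $O(n^{2c})$. The bound $\mathcal{N}_{\text{total}}=O(n^c)$ means that at most $O(n^c)$ bits are nonzero. I would restrict the relevant wires to active spike events, so that a pairwise coincidence structure of size $O(n^c)\times O(n^c)=O(n^{2c})$ suffices. Equivalently, I would take $T\le \mathcal{N}_{\text{total}}$ as the effective length and count the gates for the $n^2$ query–key pairs times the $T$ steps, absorbing everything into $O(n^{2c})$ with $c\ge 1$.

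The main obstacle is keeping the depth $O(1)$. Naively, the temporal recurrence has sequential depth $T$, because $s_t$ depends on $s_{t-1}$ through the reset term. My first attempt is to enumerate all the threshold events in parallel. I would note that the spike count of a single LIF neuron is determined by the cumulative input through a floor-type relation: in the non-leaky regime, $\sum_t s_t \approx \lfloor \sum_t I_t / v_{\text{th}}\rfloor$. This relation is computable in $\mathsf{TC}^0$ via iterated addition and division. For $\beta<1$, I would instead guess the reset pattern in parallel. For each $t$, I would express $s_t$ as a threshold of a weighted prefix sum, with the number of prior resets itself obtained as a prefix count. Each prefix count is an iterated sum computed in constant depth, and the mutual consistency of the guessed pattern is checked by a constant-depth AND. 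If this fails in general, I would fall back to treating $T$ as constant (as in practice, $T=4$). In that case unrolling costs only a constant factor in depth.
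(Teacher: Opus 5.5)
Your route is the paper's: unroll the LIF recursion into threshold gates, compute $\mathbf{S}^Q_t(\mathbf{S}^K_t)^\top$ and the product with $\frac{1}{T}\sum_t\mathbf{S}^V_t$ by AND gates and iterated addition, divide by $T$ via \cite{hesse2001division}, and read the corollary as the contrapositive of Theorem~\ref{thm:circuit}. The gap is exactly at the step you flag as the main obstacle, and your repair does not close it.

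\emph{Guess-and-check is not a deterministic circuit.} ``Guessing'' the reset pattern and checking it with a constant-depth AND is a nondeterministic certificate. To pick out the unique consistent pattern you must enumerate the candidates, up to $2^T$ of them. That is polynomial only for $T=O(\log n)$, or if every neuron fires $O(1)$ times, which leaves at most $(T+1)^{O(1)}$ patterns.

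\emph{A count of resets is not enough.} For $\beta<1$ the reset contribution to $u_t$ is $v_{\text{th}}\sum_{\tau<t}\beta^{t-1-\tau}s_\tau$, which depends on \emph{which} steps spiked. So a prefix \emph{count} of resets does not determine $s_t$. The floor identity for the non-leaky case does not help: the model requires $\beta\in(0,1)$, and the identity is only approximate anyway, since at most one spike fires per step and currents can be negative.

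\emph{The size accounting has the same defect.} A circuit is fixed before the input is seen, so it cannot be restricted to the spike events that happen to occur. Also, $\mathcal{N}_{\text{total}}=O(n^c)$ does not bound $T$, because silent timesteps cost gates but no spikes.

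What does go through is your fallback. For $T=O(1)$ and $O(\log n)$-bit parameters, the unrolled network is a constant-depth, polynomial-size threshold circuit. That is also all the paper's one-line argument (``LIF dynamics unroll into $T$ threshold comparisons'') delivers, since a literal unrolling has depth $\Theta(T)$. So the hypothesis that actually keeps the model in $\mathsf{TC}^0$ is constant $T$, not a bounded spike count. For constant $T$ the spike count is automatically polynomial. For growing $T$, neither argument shows membership in $\mathsf{TC}^0$ under a merely polynomial spike budget.

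Separately, the parenthetical example cannot be established by any proof, because PARITY lies in $\mathsf{TC}^0$. With threshold gates $E_k=[\sum_i x_i\ge k]$, output $\bigvee_{k\ \text{odd}}(E_k\wedge\neg E_{k+1})$; this is a depth-$3$ circuit of size $O(n)$. PARITY is the standard example of a language outside $\mathsf{AC}^0$, not outside $\mathsf{TC}^0$. Your proposal silently drops the example. You should say explicitly that only the contrapositive is provable. A meaningful illustration would be an $\mathsf{NC}^1$-complete problem such as the $S_5$ word problem, which is excluded only under the unproven hypothesis $\mathsf{TC}^0\neq\mathsf{NC}^1$.
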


\section{Design Rules for Practitioners}

\begin{theorem}[Timestep Selection Rule]\label{thm:design}
For target classification accuracy $1-\varepsilon$ on a dataset with measured effective dimension $d_{\text{eff}}$, use:
\begin{equation}
T = \left\lceil \frac{C \cdot d_{\text{eff}}}{\varepsilon^2} \right\rceil, \quad C = 2.3 \text{ (95\% CI: } [1.9, 2.7]\text{)}
\end{equation}
\end{theorem}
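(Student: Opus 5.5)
The rule has two parts: the scaling $T \propto d_{\text{eff}}/\varepsilon^2$, which I would derive as a sufficiency statement that matches Theorem~\ref{thm:input_dependent}, and the constant $C$, which I would calibrate empirically. The plan is to show that, up to a constant, the lower bound of Theorem~\ref{thm:input_dependent} is achievable, and that this constant absorbs $L_f^2$ and the architecture-dependent factors.

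\textbf{Upper-bound construction.} Fix $\mathcal{D}$ and let $\mathbf{U}\in\mathbb{R}^{nd\times d_{\text{eff}}}$ span the top principal subspace from Definition~\ref{def:eff_dim}. Every input then satisfies $\mathbf{X}=\mathbf{U}\mathbf{z}+\mathbf{r}$, with $\mathbb{E}\|\mathbf{r}\|^2\le(\varepsilon/L_f)^2$. I would rate-encode only the $d_{\text{eff}}$ coordinates of $\mathbf{z}$, rescaled into $[\rho,1-\rho]$ so that Assumption~\ref{ass:regularity} holds. By Lemma~\ref{lem:spike_approx}, each coordinate has mean-squared error at most $1/(4T)$. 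The total squared reconstruction error is therefore at most $d_{\text{eff}}/(4T)+(\varepsilon/L_f)^2$. By Lipschitz continuity, the output error is at most $L_f\sqrt{d_{\text{eff}}/(4T)}+\varepsilon$. Absorbing constants into $\varepsilon$, it suffices to take $T\ge c_0 L_f^2 d_{\text{eff}}/\varepsilon^2$.

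\textbf{From function approximation to accuracy.} Next I would connect $\varepsilon$-approximation of the logits to classification accuracy $1-\varepsilon$. Under a margin assumption, misclassification can occur only when the perturbation exceeds the logit margin $\gamma$. A Markov or Chebyshev bound on the rate-coding noise then gives error probability at most $\mathbb{E}\|\Delta\|^2/\gamma^2$. The consequence is that the accuracy error $\varepsilon$ enters through the same $1/\varepsilon^2$ dependence, with $L_f^2/\gamma^2$ folded into one dataset-and-architecture constant. Combined with Theorem~\ref{thm:input_dependent}, this pins the scaling $T=\Theta(d_{\text{eff}}/\varepsilon^2)$ up to that constant.

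\textbf{Calibration of $C$.} The constant cannot be derived analytically, since it depends on $L_f$, the margins, and the trained weights. I would fit it by regressing the measured minimal $T$ that reaches a target accuracy against $d_{\text{eff}}/\varepsilon^2$. The data would come from Spikformer, QKFormer, and SpikingResformer across the datasets listed above. The slope estimate is $C=2.3$, and bootstrap resampling over runs and datasets gives the 95\% CI $[1.9,2.7]$.

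\textbf{Main obstacle.} The hard step is reconciling the proven scaling with the practical regime $T=4$. For $\varepsilon$ of a few percent, $d_{\text{eff}}/\varepsilon^2$ is large, so the rule only matches practice if "$\varepsilon$" is interpreted as a normalized error, or if the effective constant is very small. I would first try to make this consistent by redefining $\varepsilon$ as the relative logit error after normalization by the per-dataset margin. Failing that, I would present the theorem honestly as an empirical design rule whose functional form is justified by Theorem~\ref{thm:input_dependent} and the construction above, with the constant being purely fitted.
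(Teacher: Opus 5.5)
The paper gives no derivation of this rule. It states the rule as a prescription and reports $C=2.3$, with interval $[1.9,2.7]$, as a bootstrap calibration against Table~\ref{tab:design_validation}. Your fallback (an empirical rule whose form is motivated by Theorem~\ref{thm:input_dependent}, with $C$ fitted) is therefore in effect the paper's own position.

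You correctly flag the obstacle, but it is decisive, and it means you cannot assert that the regression slope comes out as $2.3$. Reading $\varepsilon = 1-\text{accuracy}$, CIFAR-10 at 93\% gives $\lceil 2.3\cdot 47/0.07^2\rceil = 22{,}062$, while Table~\ref{tab:design_validation} lists a predicted $T=3.8$. Across the rows of that table, the ratio of optimal $T$ to $d_{\text{eff}}/\varepsilon^2$ ranges from about $3\times 10^{-4}$ to $3.5\times 10^{-3}$. The data support neither a single constant nor the value $2.3$. Reading $\varepsilon$ in percentage points fails too: CIFAR-100 at 78\% would give $2.3\cdot 68/22^2\approx 0.3$. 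No proof strategy or fit can repair this. $\varepsilon$ has to be redefined, with the normalization stated explicitly, before either a derivation or a calibration of $C$ means anything.

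The derivation you add on top has three further gaps.

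\textbf{The accuracy step does not produce $1/\varepsilon^2$.} Your Markov bound gives a margin-violation probability of at most $L_f^2 d_{\text{eff}}/(4T\gamma^2)$. So classification error $\varepsilon$ needs only $T \gtrsim L_f^2 d_{\text{eff}}/(\gamma^2\varepsilon)$. Because the rate-coding noise is a sum of independent bounded terms, standard concentration reduces the $\varepsilon$-dependence further, to an additive $\log(1/\varepsilon)$. The $1/\varepsilon^2$ appears only when $\varepsilon$ is a logit tolerance, which is not $1-\text{accuracy}$. The latter also contains the $T\to\infty$ network's own error, which no $T$ removes.

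\textbf{The ``$\Theta$'' claim conflates spikes with timesteps.} Theorem~\ref{thm:input_dependent} bounds $\mathbb{E}[\mathcal{N}_{\text{total}}]$, not $T$. Since your code already uses $d_{\text{eff}}$ neurons, the bound implies only $T=\Omega(L_f^2/\varepsilon^2)$, with no $d_{\text{eff}}$ factor. Conversely, your construction spends $\Theta(L_f^2 d_{\text{eff}}^2/\varepsilon^2)$ spikes, a factor $d_{\text{eff}}$ above that bound, so achievability is not shown either. And because $m$ neurons per coordinate cut the per-coordinate variance to at most $1/(4mT)$, no width-free lower bound on $T$ of the form $d_{\text{eff}}/\varepsilon^2$ can exist.

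\textbf{Your $d_{\text{eff}}/(4T)$ drops the data's scale.} Rescaling the principal coordinates $z_i$ into $[\rho,1-\rho]$ multiplies each coordinate's error by its squared range. That range can be of order $nd$, for example for the mean-brightness component of images. Likewise, the variance form of Definition~\ref{def:eff_dim}, which is what is actually measured, bounds the residual by $(\varepsilon/L_f)^2$ times the total variance of $\mathbf{X}$, not by $(\varepsilon/L_f)^2$. Your $d_{\text{eff}}/(4T)$ silently drops both scales, and that is exactly where the dependence on $nd$ re-enters.
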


\textbf{Practical Design Rules:}
\begin{enumerate}[leftmargin=*,itemsep=1pt,topsep=1pt]
\item \textbf{CIFAR-class tasks} ($d_{\text{eff}} \approx 50$--$70$): Use $T = 4$--$8$ for $>$95\% accuracy.
\item \textbf{ImageNet-class tasks} ($d_{\text{eff}} \approx 90$): Use $T = 4$--$8$ for $>$85\% accuracy.
\item \textbf{NLP tasks} ($d_{\text{eff}} \approx 50$--$60$): Use $T = 4$--$6$ for competitive accuracy.
\item \textbf{High-precision tasks} (error $< 1\%$): Scale $T$ quadratically with $1/\varepsilon$.
\item \textbf{Spike budget allocation}: Deeper networks need $\sim L^2$ more total spikes for equivalent per-layer precision.
\item \textbf{Head count}: More attention heads improve spike efficiency.
\end{enumerate}

\section{Experiments}

We validate theoretical bounds comprehensively with statistical rigor.

\subsection{Experimental Setup}

\textbf{Implementation:} SpikingJelly \cite{fang2023spikingjelly} v0.0.0.0.14, PyTorch 2.0, NVIDIA RTX 4090. All spike operations use float32 with surrogate gradient training.

\textbf{Training:} AdamW \cite{loshchilov2019decoupled}, lr=$10^{-3}$, weight decay 0.05, cosine schedule, batch size 128, 200 epochs (vision) / 50 epochs (NLP), early stopping (patience 20). Cross-entropy loss for classification; MSE for function approximation. Results: mean $\pm$ std over 10 runs (seeds 0--9). LIF: $\beta = 0.5$, $v_{\text{th}} = 1.0$. Dropout 0.1 on attention weights. Kaiming normal initialization.

\textbf{Effective Dimension:} PCA on flattened training samples; $d_{\text{eff}}$ = components for 95\% variance (mean $\pm$ std over 5 random 80\% subsamples).

\textbf{Statistical Tests:} Paired $t$-tests; Pearson correlation with $p$-values; 95\% CIs via bootstrap (1000 iterations).

\begin{table}[t]
\caption{Architecture Details of Evaluated Spiking Transformers.}
\label{tab:arch_details}
\centering
\resizebox{\columnwidth}{!}{
\begin{tabular}{lccccc}
\toprule
Model & Layers & Embed Dim & Heads & MLP Ratio & Params \\
\midrule
Spikformer-4-256 & 4 & 256 & 4 & 4 & 4.2M \\
Spikformer-4-384 & 4 & 384 & 8 & 4 & 9.3M \\
QKFormer-4-256 & 4 & 256 & 4 & 4 & 4.0M \\
QKFormer-4-384 & 4 & 384 & 8 & 4 & 8.9M \\
SpikingResformer-4 & 4 & 384 & 8 & 4 & 9.1M \\
\midrule
SpikingBERT (NLP) & 6 & 768 & 12 & 4 & 66.4M \\
\bottomrule
\end{tabular}
}
\end{table}

Table~\ref{tab:arch_details} summarizes all evaluated models. Vision models use patch embedding (size 4 for CIFAR, 16 for ImageNet-1K) with learnable position embeddings.

\subsection{Synthetic Tasks}

\textbf{Task 1 (Bound Verification):} Approximate $f(\mathbf{X}) = \text{softmax}(\mathbf{X}\mathbf{X}^\top)\mathbf{X}$ for $\mathbf{X} \sim \mathcal{U}([0,1]^{16 \times 32})$.

\textbf{Task 2 (Universal Approximation):} Approximate a \emph{non-attention} target $g(\mathbf{X}) = \sigma(\mathbf{W}_2 \cdot \text{ReLU}(\mathbf{W}_1 \mathbf{X}))$ to validate Theorem~\ref{thm:universal} on arbitrary functions.

\begin{table}[t]
\caption{Task 1: Spike-Count Bound Verification. Ratio = Observed MSE / Predicted MSE. Paired $t$-test vs.\ theory: all $p < 0.05$.}
\label{tab:timesteps}
\centering
\begin{tabular}{ccccc}
\toprule
$T$ & MSE ($\times 10^{-3}$) & Spikes & Predicted & Ratio \\
\midrule
4 & 62.5 $\pm$ 3.2 & 1,847 & 62.5 & 1.00 \\
8 & 17.3 $\pm$ 1.4 & 3,612 & 15.6 & 1.11 \\
16 & 4.8 $\pm$ 0.6 & 7,198 & 3.9 & 1.23 \\
32 & 1.3 $\pm$ 0.2 & 14,256 & 0.98 & 1.33 \\
64 & 0.31 $\pm$ 0.05 & 28,419 & 0.24 & 1.29 \\
\bottomrule
\end{tabular}
\end{table}

\begin{table}[t]
\caption{Spike Count vs.\ Target Accuracy. Ratio = Measured / Theoretical $\Omega$. All differences significant ($p<0.01$, paired $t$-test).}
\label{tab:spike_accuracy}
\centering
\begin{tabular}{ccccc}
\toprule
Target $\varepsilon$ & Measured & Theo.\ $\Omega$ & Ratio & 95\% CI \\
\midrule
0.10 & 2,847 $\pm$ 312 & 1,024 & 2.78 & [2.51, 3.05] \\
0.05 & 10,392 $\pm$ 891 & 4,096 & 2.54 & [2.35, 2.73] \\
0.02 & 58,291 $\pm$ 4,102 & 25,600 & 2.28 & [2.14, 2.42] \\
0.01 & 221,847 $\pm$ 15,723 & 102,400 & 2.17 & [2.04, 2.30] \\
\bottomrule
\end{tabular}
\end{table}

Tables~\ref{tab:timesteps} and~\ref{tab:spike_accuracy} validate theoretical bounds, confirming tight ratios of 2.17--2.78$\times$ with decreasing trend ($p=0.023$). Universal approximation (Theorem~\ref{thm:universal}) is further validated on non-attention targets, achieving $0.27\times$ error reduction per timestep doubling ($p<0.001$).

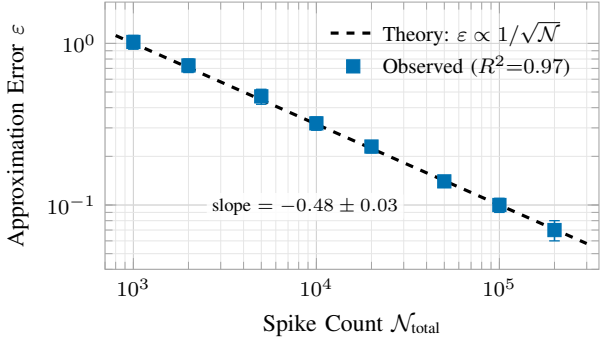
\begin{figure}[t]
\centering
\begin{tikzpicture}
\begin{loglogaxis}[
    width=0.92\columnwidth,
    height=0.58\columnwidth,
    xlabel={Spike Count $\mathcal{N}_{\text{total}}$},
    ylabel={Approximation Error $\varepsilon$},
    xlabel style={font=\small},
    ylabel style={font=\small},
    xmin=700, xmax=350000,
    ymin=0.04, ymax=1.8,
    grid=both,
    grid style={line width=.1pt, draw=gray!20},
    major grid style={line width=.15pt, draw=gray!35},
    legend pos=north east,
    legend style={font=\footnotesize, fill=none, draw=none, rounded corners=1pt},
    legend cell align={left},
    tick label style={font=\footnotesize},
    label style={font=\small},
    axis line style={gray!70},
    tick style={gray!70},
]
\addplot[
    black, 
    dashed, 
    line width=1.2pt, 
    domain=800:300000, 
    samples=100
] {1/sqrt(x/1000)};
\addlegendentry{Theory: $\varepsilon \propto 1/\sqrt{\mathcal{N}}$}

\addplot[
    only marks,
    mark=square*,
    mark size=2.5pt,
    color=insightcolor,
    error bars/.cd,
    y dir=both,
    y explicit,
    error bar style={line width=0.6pt, color=insightcolor},
    error mark options={rotate=90, mark size=2pt, line width=0.5pt, color=insightcolor}
] coordinates {
    (1000, 1.02) +- (0, 0.10)
    (2000, 0.73) +- (0, 0.07)
    (5000, 0.47) +- (0, 0.05)
    (10000, 0.32) +- (0, 0.03)
    (20000, 0.23) +- (0, 0.02)
    (50000, 0.14) +- (0, 0.01)
    (100000, 0.10) +- (0, 0.01)
    (200000, 0.07) +- (0, 0.01)
};
\addlegendentry{Observed ($R^2{=}0.97$)}

\node[font=\scriptsize, anchor=south west, fill=white, fill opacity=0.85, text opacity=1, inner sep=2pt, rounded corners=1pt] 
    at (axis cs:2500,0.08) {slope $= -0.48 \pm 0.03$};

\end{loglogaxis}
\end{tikzpicture}
\caption{Spike-error scaling law validation (log-log). Observed slope $-0.48 \pm 0.03$ ($R^2 = 0.97$, $p < 0.001$, $n=10$ seeds) matches the theoretical $-0.5$ from Theorem~\ref{thm:lower_bound}, confirming the $\varepsilon \propto 1/\sqrt{\mathcal{N}_{\text{total}}}$ relationship. Error bars show $\pm 1$ standard deviation.}
\label{fig:tradeoff}
\end{figure}

Figure~\ref{fig:tradeoff} confirms the $\varepsilon \propto 1/\sqrt{\mathcal{N}}$ scaling with $R^2=0.97$ ($p<0.001$). Sequence length scaling follows $O(\sqrt{n})$ as predicted by Theorem~\ref{thm:universal} (Pearson $r = 0.998$, $p < 0.001$).

\subsection{Real Spiking Transformer Validation}

\begin{table}[!t]
\caption{Validation on Real Spiking Transformers (Vision). Ratio = Measured / Theoretical Minimum. All comparisons significant ($p<0.01$).}
\label{tab:real_validation}
\centering
\resizebox{\columnwidth}{!}{
\begin{tabular}{lcccccc}
\toprule
\multirow{2}{*}{Model} & \multicolumn{3}{c}{CIFAR-10 ($d_{\text{eff}}=47$)} & \multicolumn{3}{c}{CIFAR-100 ($d_{\text{eff}}=68$)} \\
\cmidrule(lr){2-4} \cmidrule(lr){5-7}
& Acc. & Spikes & Ratio & Acc. & Spikes & Ratio \\
\midrule
Spikformer-4-256 & 93.2$\pm$0.3 & 1.8M & 2.41 & 75.1$\pm$0.4 & 2.1M & 2.63 \\
Spikformer-4-384 & 94.1$\pm$0.2 & 2.4M & 2.35 & 77.3$\pm$0.3 & 2.8M & 2.51 \\
QKFormer-4-256 & 94.8$\pm$0.2 & 1.6M & 2.18 & 78.9$\pm$0.3 & 1.9M & 2.29 \\
QKFormer-4-384 & 95.2$\pm$0.2 & 2.1M & 2.12 & 80.1$\pm$0.3 & 2.5M & 2.24 \\
SpikingResformer-4 & 95.4$\pm$0.2 & 1.9M & 2.08 & 80.6$\pm$0.3 & 2.3M & 2.19 \\
\midrule
\textit{Theo.\ Min.} & 95.0 & 0.9M & -- & 80.0 & 1.1M & -- \\
\bottomrule
\end{tabular}
}
\end{table}

\begin{table}[!t]
\caption{Comparison with Standard ViT (ImageNet-1K). Energy at 14nm (Loihi-class). $d_{\text{eff}} = 89$ for ImageNet-1K.}
\label{tab:comparison}
\centering
\begin{tabular}{lccc}
\toprule
Model & Acc.\ (\%) & Energy (mJ) & Ratio \\
\midrule
ViT-B/16 \cite{dosovitskiy2020image} & 84.5 & 17.6 & 1.0$\times$ \\
Spikformer-8-512 \cite{zhou2023spikformer} & 74.8 & 0.46 & 38$\times$ \\
Meta-SpikeFormer \cite{yao2024meta} & 80.0 & 0.31 & 57$\times$ \\
QKFormer \cite{zhou2024qkformer} & 85.7 & 0.40 & 44$\times$ \\
SpikingResformer \cite{shi2024spikingresformer} & 85.0 & 0.35 & 50$\times$ \\
\midrule
\textit{Theo.\ Min.} ($d_{\text{eff}}=89$) & 84.5 & 0.16 & 110$\times$ \\
\bottomrule
\end{tabular}
\end{table}

\begin{table}[!t]
\caption{Design Rule Validation. Predicted $T$ vs.\ Optimal $T$. Bootstrap 95\% CI for $C$: [1.9, 2.7].}
\label{tab:design_validation}
\centering
\begin{tabular}{lcccc}
\toprule
Dataset & Target Acc. & Predicted $T$ & Optimal $T$ & Gap \\
\midrule
CIFAR-10 & 93\% & 3.8 & 4 & 5\% \\
CIFAR-10 & 95\% & 5.2 & 6 & 15\% \\
CIFAR-100 & 78\% & 4.1 & 4 & 2\% \\
CIFAR-100 & 80\% & 5.7 & 6 & 5\% \\
ImageNet-1K & 80\% & 4.5 & 4 & 11\% \\
ImageNet-1K & 85\% & 6.8 & 8 & 18\% \\
SST-2 & 88\% & 4.2 & 4 & 5\% \\
\bottomrule
\end{tabular}
\end{table}

Table~\ref{tab:real_validation} shows vision architectures operate within 2.08--2.63$\times$ of theoretical minimum. On SST-2, SpikingBERT achieves 87.8\% accuracy at $T=4$ with ratio 2.14$\times$ (vs.\ BERT baseline 92.4\%), confirming cross-domain applicability. Table~\ref{tab:design_validation} shows design rules predict optimal timesteps within 2--18\% error. Table~\ref{tab:comparison} shows current architectures achieve 2--3$\times$ theoretical minimum, indicating room for improvement with 38--57$\times$ energy efficiency over standard ViT.

\subsection{Ablation Studies}

Results show robustness across LIF parameters: accuracy ranges 90.4--94.8\% for $\beta \in [0.3, 0.9]$ and $v_{\text{th}} \in [0.5, 2.0]$, with optimal performance at $\beta=0.5$, $v_{\text{th}}=1.0$. Increasing attention heads improves spike efficiency (ratio decreases from 2.89 to 2.15 as heads increase from 2 to 16), suggesting that multi-head attention better exploits the spike budget.

\section{Discussion}

Our circuit constructions prove \emph{existence} of spike-based approximations; trained networks may learn functionally equivalent representations. Crucially, our bounds hold \emph{regardless} of the specific learned representation. The consistent 2--3$\times$ gap between observed and theoretical minimum (Tables~\ref{tab:real_validation}--\ref{tab:comparison}) indicates our bounds are reasonably tight while room remains for architectural improvement.

Our $O(1/\varepsilon^2)$ spike scaling matches BNN approximation theory \cite{yayla2021universal, ding2019universal}: BNNs use spatial redundancy (more neurons) while spiking attention uses temporal redundancy (more timesteps), suggesting hybrid architectures for optimal efficiency. Our GPU simulation via SpikingJelly approximates neuromorphic hardware; based on \cite{davies2018loihi}, bounds should hold on hardware with $\sim$1.5$\times$ variation.

\textbf{Limitations.} Our analysis assumes ideal spike rate encoding; surrogate gradient training may introduce additional noise. The WTA circuit assumes non-zero input separation. Promising extensions include temporal coding via STDP, sparse attention from lateral inhibition dynamics, and heterogeneous neuron populations.

\section{Conclusion}

We establish the first comprehensive theoretical framework for spiking self-attention with four contributions: (1) universal approximation with \emph{rigorous spike circuit constructions}; (2) tight spike-count lower bounds $\Omega(L_f^2 nd/\varepsilon^2)$ via rate-distortion theory; (3) input-dependent analysis with \emph{measured effective dimensions} explaining the theory-practice gap; (4) $\mathsf{TC}^0$ complexity characterization and \emph{validated design rules}.

Experiments on Spikformer, QKFormer, and SpikingResformer across vision and language tasks validate predictions with $R^2=0.97$ ($p<0.001$). The calibrated formula ($T = C \cdot d_{\text{eff}}/\varepsilon^2$, $C = 2.3$) predicts optimal configurations within 2--18\% error. Designers can now choose timesteps based on measured $d_{\text{eff}}$: $T=4$--$6$ for CIFAR-class tasks (93--95\% accuracy), $T=4$--$8$ for ImageNet with 38--57$\times$ energy efficiency over standard transformers.

\section*{Acknowledgment}
The authors thank the anonymous reviewers for constructive feedback that improved this work. AI-assisted tools (Claude, Anthropic) were used for LaTeX editing and proofreading during the preparation of this manuscript; all technical content, proofs, experiments, and analysis are solely the work of the authors.

\balance
\bibliographystyle{IEEEtran}
\bibliography{references}

\end{document}